\theoremstyle:=definition,remark,plain\do{%
        \expandafter\g@addto@macro\csname th@\theoremstyle\endcsname{%
            \addtolength\thm@preskip\parskip
            }%
        }
\definecolor{cornellred}{rgb}{0.7, 0.11, 0.11}
\definecolor{dgreen}{rgb}{0.0, 0.5, 0.0}
\definecolor{ballblue}{rgb}{0.13, 0.67, 0.8}
\definecolor{royalblue(web)}{rgb}{0.25, 0.41, 0.88}
\definecolor{bleudefrance}{rgb}{0.19, 0.55, 0.91}
\definecolor{royalazure}{rgb}{0.0, 0.22, 0.66}
\newtheorem{theorem}{Theorem}[section]
\newtheorem{lemma}[theorem]{Lemma}
\newtheorem*{lemma*}{Lemma}
\newtheorem{corollary}[theorem]{Corollary}
\newtheorem{claim}{Claim}
\theoremstyle{definition}
\newtheorem{definition}{Definition}[section]
\theoremstyle{definition}
\newenvironment{remark}[1][Remark]{\begin{trivlist}
\item[\hskip \labelsep {\bfseries #1}]}{\end{trivlist}}
\DeclarePairedDelimiter{\norm}{\lVert}{\rVert}
\DeclarePairedDelimiterX{\set}[1]\{\}{#1}
\let\Pr\relax
\DeclarePairedDelimiterXPP{\Pr}[1]{\mathbb{P}}[]{}{#1}
\DeclarePairedDelimiterXPP{\Ex}[1]{\mathbb{E}}[]{}{#1}
\DeclareFontFamily{U}{matha}{\hyphenchar\font45}
\DeclareFontShape{U}{matha}{m}{n}{
      <5> <6> <7> <8> <9> <10> gen * matha
      <10.95> matha10 <12> <14.4> <17.28> <20.74> <24.88> matha12
      }{}
\DeclareSymbolFont{matha}{U}{matha}{m}{n}
\DeclareMathSymbol{\wedge}         {2}{matha}{"5E}
\DeclareMathSymbol{\vee}           {2}{matha}{"5F}
\theoremstyle{remark}
\title{Better Depth-Width Trade-offs for Neural Networks\\ through the lens of Dynamical Systems}
\author[$\dagger$]{Vaggos Chatziafratis}
\author[$\star$]{Sai Ganesh Nagarajan}
\author[$\star$]{Ioannis Panageas}
\affil[$\dagger$]{Stanford University, Department of Computer Science}
\affil[$\star$]{Singapore University of Technology and Design}
\date{}
\begin{document}

\maketitle

\begin{abstract}
The expressivity of neural networks as a function of their depth, width and type of activation units has been an important question in deep learning theory. Recently, depth separation results for ReLU networks were obtained via a new connection with dynamical systems, using a generalized notion of fixed points of a continuous map $f$, called periodic points. In this work, we strengthen the connection with dynamical systems and we improve the existing width lower bounds along several aspects. Our first main result is period-specific width lower bounds that hold under the stronger notion of $L^1$-approximation error, instead of the weaker classification error. Our second contribution is that we provide sharper width lower bounds, still yielding meaningful exponential depth-width separations, in regimes where previous results wouldn't apply. A byproduct of our results is that there exists a universal constant characterizing the depth-width trade-offs, as long as $f$ has odd periods. Technically, our results follow by unveiling a tighter connection between the following three quantities of a given function: its period, its Lipschitz constant and the growth rate of the number of oscillations arising under compositions of the function $f$ with itself.
\end{abstract}

\section{Introduction}
\label{submission}
Deep Neural Networks (NNs) with many hidden layers are now at the core of modern machine learning applications and can achieve remarkable performance that was previously unattainable using shallow networks. But why are deeper networks better than shallow? Perhaps intuitively, one can understand that the nature of computation done by deep and shallow networks is different; simple one hidden layer NNs extract independent features of the input and return their weighted sum, while deeper NNs can compute features of features, making the features computed by deeper layers no longer independent. Another line of intuition~(\cite{poole2016NIPS}), is that highly complicated manifolds in input space can actually turn into flattened manifolds in hidden space, thus helping with downstream tasks (e.g., classification).

To make the above intuitions formal and understand the benefits of depth, researchers try to understand the \textit{expressivity} of NNs and prove \textit{depth separation} results. Early results in this area  sometimes referred to as universality theorems~\citep{cybenko1989approximation, hornik1989multilayer}, state that NNs of just one hidden layer, equipped with standard activation units (e.g., sigmoids, ReLUs etc.) are ``dense'' in the space of continuous functions, meaning that any continuous function can be represented by an appropriate combination of these activation units. There is a computational caveat however, since the width of this one hidden layer network can be unbounded and grow arbitrarily with the input function. In practice, resources are bounded, hence the more meaningful questions have to do with depth separations.

This is a foundational question not only in deep learning theory but also in other computational models (e.g., boolean circuit complexity~\citep{hastad1986almost,kane2016super}) with a rich history of prior work, bringing together ideas and techniques from boolean functions, Fourier and harmonic analysis, special functions, fractal geometry, differential geometry and more recently dynamical systems and chaos. At a high level, all these works define an appropriate notion of ``complexity'' and later demonstrate how deeper models are significantly more powerful than
shallower models. A partial list of the different notions of complexity that have been considered include global curvature~\citep{poole2016NIPS} and trajectory length~\citep{raghu2017ICML}, number of activation patterns~\citep{hanin2019deep} and linear regions~\citep{montufar2014number,arora2016understanding}, fractals~\citep{malach2019deeper}, the dimension of algebraic varieties~\citep{kileel2019expressive}, Fourier spectrum of radial functions~\citep{eldan2016COLT}, number of oscillations~\citep{schmitt2000lower,Telgarsky16,telgarsky15} and periods of continuous maps~\citep{ICLR}.

In this work, we build upon the works by~\cite{Telgarsky16} that relied on the number of oscillations of continuous functions and by~\cite{ICLR} that relied on periodic orbits present in a continuous function and connections to dynamical systems to derive depth separations (see \hyperref[sec:prelim]{Section}~\ref{sec:prelim} for definitions). We pose the following question:
\begin{center}
\textit{Can we exploit further connections to dynamical systems to derive improved depth-width trade-offs?}
\end{center}

We are indeed able to do this and improve the known depth separations along several aspects:
\begin{itemize}
    \item We show that there exist real-valued functions $f$, expressible by deep NNs, for which shallower networks, even with exponentially larger width, incur large $L^1$ error instead of the weaker\footnote{The word ``weaker'' here is justified because the goal is to prove a \textit{lower} bound on the approximation error. Notice that there exist cases where the classification error is large, but the $L^1$ error is small (e.g., see example in \hyperref[fig:errors]{Figure}~\ref{fig:errors}).} notion of \textit{classification} error that was previously shown~\citep{ICLR,telgarsky15}.
    \item We obtain width lower bounds that are sharper across all regimes for the periodic orbits in $f$ and surprisingly we show that there is a universal constant characterizing the depth-width trade-offs, as long as $f$ contains points of odd period. This was not known before as the trade-offs were becoming increasingly less pronounced (approaching the trivial value 1) when $f$'s period was growing.
    \item Finally, the obtained period-specific depth-width trade-offs are shown to hold against shallow networks equipped with \textit{semi-algebraic units} as defined in~\cite{Telgarsky16} and can be extended to the case of high-dimensional input functions by an appropriate projection.
\end{itemize}

Technically, our improved results are based on a tighter eigenvalue analysis of the dynamical systems arising from the periodic orbits in $f$ and on some new connections between the Lipschitz constant of $f$, its (prime) period, and the growth rate of the oscillatory behaviour of repeatedly composing  $f$ with itself. This latter connection allows us to lower bound the $L^1$ error of shallow (but wide) networks, yielding period-specific depth-width lower bounds.

At a broader perspective, we completely answer a question raised by~\cite{Telgarsky16}, regarding the construction of large families of hard-to-represent functions. Our results are tight, as one can explicitly construct examples of functions that achieve equality in our bounds (see \hyperref[lem:example]{Lemma}~\ref{lem:example}). En route to our results, we unify and extend previous methods for depth separations~\cite{Telgarsky16,ICLR, schmitt2000lower}.

Last but not least, we complement our theoretical findings with experiments on a synthetic data set to validate our obtained $L^1$ bounds and also contextualize the fact that depth can indeed be beneficial for some simple learning tasks involving functions of certain periods.

\subsection{Background on dynamical systems}
Here we give the necessary background from dynamical systems in order to later state our results more formally. From now on, $f:[a,b] \to [a,b]$ is assumed to be continuous.

\paragraph{Periods:}
The notion of a \textit{periodic} point (a generalization of a \textit{fixed} point) will be important:
\begin{definition}
We say $f$ contains period $n$ or has a point of period $n \ge 1$, if there exists a point $x_0 \in [a,b]$ such that\footnote{As usual, $f^{n}(x_0)$ denotes the composition of $f$ with itself $n$ times, evaluated at point $x_0$.}:
$$
	f^{n}(x_0)=x_0 {\text{ \ \ and\ \ \ }} f^{k}(x_0)\neq x_0, \forall\ 1 \le k \le n-1.
$$
In particular, $C := \{x_0,f(x_0), f(f(x_0)),\dots, f^{n-1}(x_0)\}$ has distinct elements (each of which is a point of period $n$) and is called a cycle (or orbit) with period $n$.
\end{definition}

Observe that since $f:[a,b] \to [a,b]$ is continuous, it must have a point of period 1, i.e., a fixed point.

\paragraph{Sharkovsky's Theorem:} Recently,~\cite{ICLR} used the period of $f$ to derive period-specific depth-width trade-offs via Sharkovsky's theorem~\citep{sharkovsky1964coexistence,sharkovsky1965cycles} from dynamical systems that provides restrictions on the allowed periods $f$ can have:

\begin{definition}
Define the following (decreasing) ordering $\triangleright$ called \textit{Sharkovsky's ordering}:
\[
3 \triangleright 5 \triangleright 7\triangleright \ldots \triangleright2\cdot3 \triangleright 2\cdot5 \triangleright 2\cdot7\triangleright\ldots
\]
\[
\ldots \triangleright 2^2\cdot3 \triangleright 2^2\cdot5 \triangleright 2^2\cdot7\triangleright\ldots\triangleright2^3\triangleright2^2\triangleright2\triangleright1
\]
We write $l\triangleright r$ or $r\triangleleft l$ whenever $l$ is to the left of $r$ and this gives a total ordering on the natural numbers.
\end{definition}

Observe that the number 3 is the largest according to this ordering. Sharkovsky showed a surprising and elegant result about his ordering: it describes which numbers can be periods for a
continuous map on an interval; allowed periods must be a suffix of his ordering:

\begin{theorem}[Sharkovsky's Theorem]
\label{thm:sharkovsky}
If $f$ contains period $n$ and $n \triangleright n'$, then $f$ also contains period $n'$.
\end{theorem}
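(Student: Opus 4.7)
The plan is to reduce Sharkovsky's theorem to a combinatorial analysis of a graph built from a given periodic orbit, via two main ingredients: an \emph{itinerary lemma} and the \emph{\v{S}tefan cycle structure}.

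First, I would establish the itinerary lemma: if $I_0, I_1, \ldots, I_{m-1}$ are closed subintervals of $[a,b]$ with $f(I_k) \supseteq I_{k+1 \bmod m}$ for every $k$, then there exists $x \in I_0$ with $f^k(x) \in I_k$ for each $k$ and $f^m(x) = x$. This follows from iteratively pulling back nested preimages using the intermediate value theorem applied to $f|_{I_k}$; the resulting nested chain of closed intervals inside $I_0$ has nonempty intersection, and any point in that intersection realizes the desired itinerary and is a fixed point of $f^m$.

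Next, given a cycle $C = \{p_1 < p_2 < \cdots < p_n\}$ of period $n$, I would form its \emph{Markov graph} $G(C)$: the $n-1$ vertices are the consecutive closed intervals $I_j = [p_j, p_{j+1}]$, and a directed edge $I_i \to I_j$ is drawn whenever $f(I_i) \supseteq I_j$. Every directed loop of length $\ell$ in $G(C)$ yields, via the itinerary lemma, a point $x$ with $f^{\ell}(x) = x$; with a small amount of extra care one can extract loops whose associated point has \emph{prime} period exactly $\ell$, by excluding loops that merely retrace the original cycle or get pinned at endpoints.

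The core combinatorial step is the odd case. If $n = 2k+1 \geq 3$ is odd, I would argue that $G(C)$ must contain a \emph{\v{S}tefan subgraph}: because $f$ permutes the $n$ cycle points without preserving their linear order, some interval $I_{j^{\star}}$ must be mapped across itself by $f$, and tracing the images of the neighboring intervals forces a specific zig-zag covering pattern. This produces a directed path $I_{a_1} \to I_{a_2} \to \cdots \to I_{a_{n-1}}$ together with a back-edge $I_{a_{n-1}} \to I_{a_1}$ and a self-loop at $I_{a_1}$. Reading off loops in this explicit small graph yields directed loops of every length $\ell \geq n$ and of every even length $\ell \leq n-1$, which is exactly the tail of Sharkovsky's ordering starting at the odd number $n$. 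Constructing this \v{S}tefan subgraph and verifying its loop structure is the main obstacle and is the heart of Sharkovsky's original argument.

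Finally, I would cover the remaining part of the ordering via two reductions. For mixed periods $2^s q$ with $q \geq 3$ odd, I would apply the odd-case analysis to the iterate $f^{2^s}$, which inherits a period-$q$ orbit from the original period-$2^s q$ orbit, and then lift the resulting periods of $f^{2^s}$ back to periods of $f$ using divisibility relations between prime periods under iteration. For the pure power-of-two tail $\cdots \triangleright 8 \triangleright 4 \triangleright 2 \triangleright 1$, I would iterate the simpler observation that a period-$2^r$ point of $f$ with $r \geq 1$ yields a period-$2^{r-1}$ point of $f^2$, combined with the base fact that any continuous self-map of a closed interval has a fixed point.
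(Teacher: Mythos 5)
The paper does not prove Sharkovsky's theorem; it invokes it as a classical black-box result, citing Sharkovsky's original papers, so there is no in-paper argument to compare against. Your sketch reproduces the standard modern proof via loops in the Markov graph of a periodic orbit together with \v{S}tefan cycles, and the architecture is correct: the itinerary (pull-back) lemma is the right engine, and the three-way case split (odd $n$, mixed $2^s q$ with $q>1$ odd, pure powers of two) is exactly how the ordering is usually covered. Two places deserve tightening. First, for a generic odd-period-$n$ orbit the Markov graph need not exhibit the full \v{S}tefan zig-zag directly; the standard route is to pass to an orbit whose period is the \emph{minimal} odd period $>1$ possessed by $f$, prove that this minimal orbit is a \v{S}tefan cycle up to reflection, and only then read off loop lengths --- minimality is what makes the ``tracing the images forces a zig-zag'' step actually go through, and your sketch omits this reduction. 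Second, in the mixed case a period-$m$ point of $f^{2^s}$ lifts to a point whose period under $f$ is $2^j m$ for some $0 \le j \le s$ that you do not control a priori; the argument still closes because each such $2^j m$ precedes or equals $2^s m$ in the Sharkovsky order, but this deserves an explicit sentence rather than the gesture at ``divisibility relations.'' As a structural side remark: the covering relations (Definition~\ref{def:covering}), the monotonicity lemma for odd-period orbits (Lemma~\ref{lem:monotonicity}), and the adjacency-matrix analysis the paper does develop are precisely the Markov-graph machinery you invoke, just repurposed to lower-bound oscillation growth of $f^t$ rather than to reprove Sharkovsky.
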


According to Sharkovsky's Theorem, 3 is the maximum period, so one important and easy-to-remember corollary is that period 3 implies \textit{all} periods.\footnote{On a historical plot twist, this special case was proved a decade later by James Yorke and Tien-Yien Li, in their seminal paper called ``Period Three Implies Chaos''~\citep{li1975period}; this is a celebrated result that introduced the term ``chaos'' as used in Mathematics (chaos theory).}

We finally need the definition of a \textit{prime period} for $f$:
\begin{definition}[Prime period]\label{def:prime} A function $f$ has prime period $n$ as long as it contains period $n$, but has no periods greater than $n$, according to the Sharkovsky $\triangleright$  ordering.
\end{definition}

Notice that for $f(x)=1-x$, its prime period is 2, since $f(f(x))=1-(1-x)=x$, which implies that it also has fixed point $(f(\tfrac12)=\tfrac12)$.

\subsection{Classification, $L^1$ and $L^\infty$ errors}
\cite{Telgarsky16} proved that $f$ can be the output of a deep NN, for which any function $g$ belonging to a family of shallow, yet extremely wide NNs, will incur high approximation error. He used the most satisfying measure for \textit{lower} bounding the approximation error between $f$ and $g$, which was the $L^1$ error. We say $L^1$ is satisfying, because if the $L^1$ distance between two functions is large, then certainly there are sets of positive measure in the domain where they differ. Just to make the point clear, if $L^\infty$ was used, it wouldn't imply good depth separations, since $L^\infty$ is extremely sensitive even to single point differences. Of course, the situation gets reversed if instead the goal is to obtain distance upper bounds, for which $L^\infty$ is the most desirable. On the other hand, the classification error used in~\citep{ICLR,telgarsky15} (for exact definition, see \hyperref[sec:prelim]{Section}~\ref{sec:prelim}) is a much weaker notion of approximation, that does not seem appropriate for comparing continuous functions, since $f$ and $g$ can have large classification distance, yet still be the same, almost everywhere (i.e., their $L^1$ is arbitrarily close to zero). An explanation for this is depicted in \hyperref[fig:errors]{Figure}~\ref{fig:errors}.

\begin{figure}[ht]
\vskip 0.2in
\begin{center}
\centerline{\includegraphics[width=0.7\columnwidth]{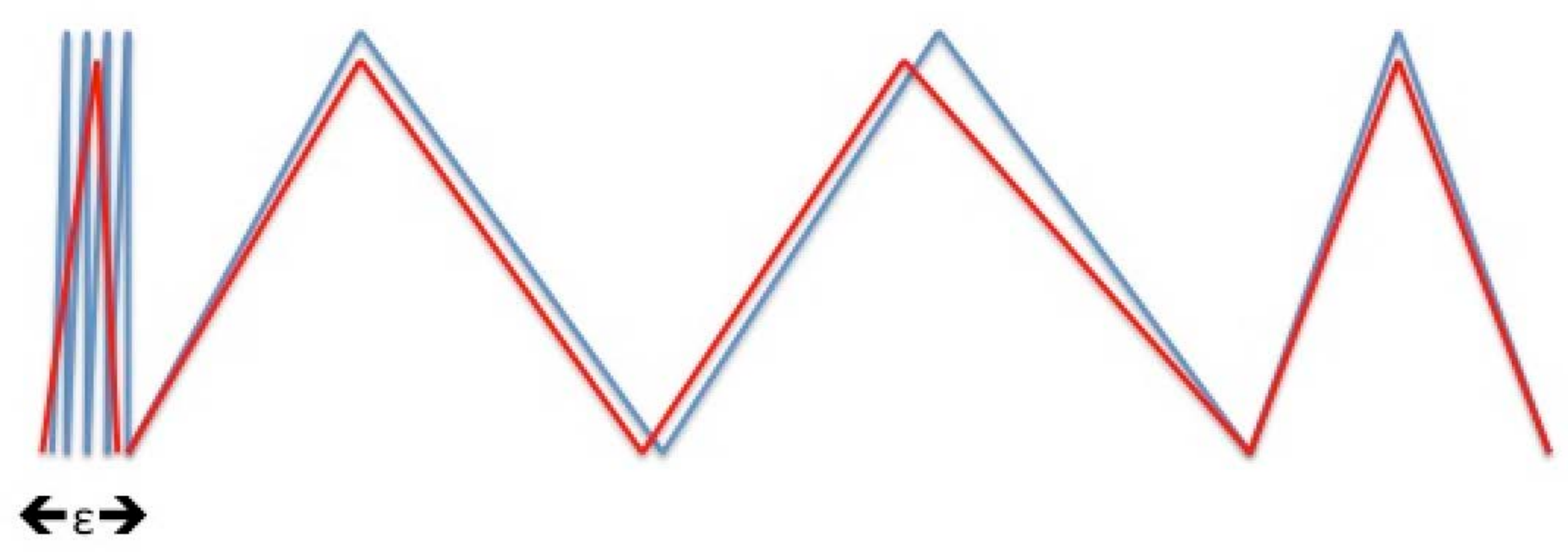}}
\caption{A comparison between two functions that agree almost everywhere. The $L^1$ error is small, however the classification error can be artificially inflated, hence leading to separations, based on unsatisfactory notions of approximation. Similarly, one should not rely on using $L^\infty$ error to get separation results, which is also large in this example.}
\label{fig:errors}
\end{center}
\vskip -0.2in
\end{figure}

To get his $L^1$ bound, Telgarsky presented a simple and highly symmetric construction based on the triangle (or tent) map, which can be thought of as a combination of just two ReLUs. Later he used it to argue that repeated compositions of this map with itself (equivalently, concatenating layers one after the other) yield highly oscillatory outputs. Since functions $g$ generated by shallow networks cannot possibly have so rapidly changing oscillations, he relied on symmetries due to the triangle map and he estimated areas where the two functions differ in order to get a lower bound between $g$ and the triangle compositions.

However, here we can no longer use the specific tent map, since we generalize the constructions based only on the periods of the functions; hence all the symmetries and regularities used to derive the $L^1$ bound are gone. For us, the challenge will be to bound the $L^1$ error based on the periods. For example, in a special case of our result, when the function $f$ has period 3, only 3 values of the function are known on 3 points in the domain.  Can one use such limited information to bound the $L^1$ error against shallow NNs? The natural question that arises is the following:
\begin{center}
\textit{Is it possible to obtain period-specific depth-width trade-offs based on $L^1$ error instead of classification error using only information about the periods?}
\end{center}
Surprisingly, the answer is yes and at a high level, we show that the oscillations arising by function compositions are not pathologically concentrated only on ``tiny'' sets in the domain. Specifically, we carry this out by exploiting some new connections between the prime period of $f$, its Lipschitz constant and the growth rate of the number of oscillations when taking compositions with itself.

\subsection{Periods, Lipschitz constant and Oscillations}
A byproduct of our analysis will be that given two ``design'' parameters for a function $f$, its prime period and Lipschitz constant, we will be able to construct hard-to-represent functions with these parameters, or say it is impossible. This gives a better understanding between those apparently unrelated quantities. To do this we rely on the oscillations that appear after composing $f$ with itself multiple times, as the underlying thread connecting the above notions.

We can show that the Lipschitz constant always dominates the growth rate of the number of oscillations as we take compositions of $f$ with itself. Moreover, whenever its Lipschitz constant matches this growth rate, we can prove that its repeated compositions cannot be approximated by shallow NNs (where the depth of the NN is sublinear in the number of compositions). Finally, we can characterize the number of oscillations in terms of the prime period of the function of interest. These findings provide bounds between the three quantities,: prime period, Lipschitz constant and oscillations.

\subsection{Our contributions}
We now have the vocabulary to state and interpret our results. For simplicity, we will give informal statements that hold against ReLU NNs, but everything goes through for semi-algebraic gates and for higher dimensions as well. Our first result connects the periods with the number of oscillations and improves upon the bounds obtained in~\cite{ICLR}.

\begin{theorem}\label{th:intro}
Let $f: [a,b]\to [a,b]$ have odd prime period $p>1$. Then, there exist points $x<y\in[a,b]$, so that the number of oscillations between $x$ and $y$ is $\rho^t$ where $\rho$ is the root greater than one of the polynomial: \[z^{p-1} - z^{p-2} - \sum_{j=0}^{p-3} (-z)^j=0.\]
\end{theorem}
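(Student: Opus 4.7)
The plan is to leverage a strengthening of Sharkovsky's theorem due to Stefan which pins down the canonical combinatorial type of a cycle of odd prime period, and then translate the oscillation count of $f^t$ into a walk-counting problem on an associated Markov graph whose Perron root is exactly the stated $\rho$.

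First, since $f$ has odd prime period $p>1$, Stefan's theorem guarantees the existence of a canonical Stefan cycle $c_1 < c_2 < \ldots < c_p$ whose permutation under $f$ has a specific fold-back form: with middle point $c_k$ where $k=(p+1)/2$, the orbit reads $c_k \to c_{k+1} \to c_{k-1} \to c_{k+2} \to c_{k-2} \to \ldots$, or its mirror image. I would fix this cycle, set $x = c_1$ and $y = c_p$, and consider the $p-1$ subintervals $I_i = [c_i, c_{i+1}]$. From the permutation one then reads off the $0/1$ transition (Markov) matrix $A$ with $A_{ij}=1$ iff $f(I_i) \supseteq I_j$.

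Second, I would invoke the standard \emph{itinerary} lemma from one-dimensional dynamics: for any admissible walk $I_{i_0} \to I_{i_1} \to \ldots \to I_{i_t}$ in the Markov graph, continuity of $f$ produces a genuine subinterval $J \subseteq I_{i_0}$ on which $f^t$ is monotone and maps onto $I_{i_t}$, and subintervals arising from distinct walks have pairwise disjoint interiors. This lower bounds the number of monotone laps, and hence the number of oscillations, of $f^t$ between $x$ and $y$ by the number of length-$t$ directed walks in the graph $G$ with vertex set $\{I_1, \ldots, I_{p-1}\}$ and adjacency matrix $A$. Since the Stefan permutation is transitive on the cycle, $G$ is strongly connected, so Perron--Frobenius applies: the walk count grows like $\rho^t$, where $\rho>1$ is the largest real eigenvalue of $A$.

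Finally I would identify $\rho$ by computing $\det(zI - A)$ explicitly. The Stefan matrix is very sparse---each row has at most two $1$'s, in prescribed positions dictated by the fold-back pattern---which allows a cofactor expansion along the middle column or, equivalently, a short induction on $p$ reducing the $(p-1)$-dimensional determinant to the $(p-3)$-dimensional one. The resulting two-term recurrence telescopes into the polynomial $z^{p-1} - z^{p-2} - \sum_{j=0}^{p-3}(-z)^j$, matching the golden-ratio polynomial $z^2-z-1$ at $p=3$ and $z^4 - z^3 - z^2 + z - 1$ at $p=5$ as sanity checks. The main obstacle I anticipate is this determinant identity: because of the alternating signs in $\sum_{j=0}^{p-3}(-z)^j$, reaching exactly the stated polynomial (rather than a neighbouring one produced by a naive recurrence) requires carefully tracking how the cofactor expansion interacts with the middle rows of the Stefan matrix, and picking the correct inductive hypothesis on $p$ is the key technical step.
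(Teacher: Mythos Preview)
Your proposal follows essentially the same route as the paper: invoke the Stefan structural lemma for odd prime period (the paper cites it as a ``monotonicity lemma''), build the Markov/covering graph on the $p-1$ subintervals of the cycle, relate crossings of $f^t$ to walks in this graph, and read off the growth rate as the Perron root of the adjacency matrix, then identify that root by computing the characteristic polynomial. The only methodological difference is the determinant calculation: the paper splits the $(p-1)\times(p-1)$ adjacency matrix into four $\tfrac{p-1}{2}\times\tfrac{p-1}{2}$ blocks and applies the Schur complement, reducing the computation to an upper-triangular matrix, whereas you propose a cofactor expansion or a two-term recursion in $p$. Both lead to the same factorisation $\pi_p(\lambda)=\tfrac{\lambda^p-2\lambda^{p-2}-1}{\lambda+1}$; the Schur route is a bit more mechanical, your inductive route would give a cleaner structural explanation for the alternating sum.

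One small point to fix: your choice $x=c_1$, $y=c_p$ (the extreme points of the cycle) does not line up with the walk count you describe. Distinct length-$t$ walks in the Markov graph yield disjoint subintervals on which $f^t$ is monotone onto the \emph{terminal} Markov interval $I_{i_t}$, so they lower-bound the number of crossings of $I_{i_t}$, not of the full span $[c_1,c_p]$. The paper accordingly takes $x,y$ to be the endpoints of the central interval $I_0=[x_1,x_2]$ and tracks the coordinate $\delta^t_0$, bounding it below by $\|A^t\|_\infty\ge\rho^t$. Adjust your $x,y$ to the endpoints of one fixed Markov interval and count walks ending at that vertex, and your argument goes through.
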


Our second result ties the Lipschitz constant with the depth-width trade-offs under the $L^1$ approximation.
\begin{theorem}\label{lem:intro-match} Let $f: [a,b] \to [a,b]$ be $L$-Lipschitz, and $g$ be any ReLU NN with $u$ units per layer and $l$ layers. Suppose there exist numbers $x,y\in[a,b]$, such that the oscillations of $f^t$ between $x,y$ are $\Theta(\rho^t)$ for some constant $\rho>1$. As long as $L = \rho$, then for any NN $g$ that has width-depth such that $(2u)^l\le \rho^t$, we get the desired $L^1$-separation:
\[\min_g \int_{a}^b |f^t(z) - g(z)|dz \geq c(x,y) > 0,\] where $c(x,y)$ depends on $x,y$ but \emph{not} on $t$.
\end{theorem}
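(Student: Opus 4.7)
The plan is to adapt Telgarsky's piece-counting/pigeonhole template to the general setting where the symmetric tent map is replaced by an arbitrary $f$ whose iterate has many oscillations.

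I would first collect the two combinatorial inputs. A ReLU network with $u$ units per layer and $l$ layers computes a piecewise-affine function with at most $(2u)^l$ affine pieces (Telgarsky's standard depth-induction bound); meanwhile, $f^t$ is $L^t = \rho^t$-Lipschitz since $f$ is $\rho$-Lipschitz, and by hypothesis $f^t$ has $N = \Theta(\rho^t)$ oscillations on $[x,y]$. I group consecutive monotone pieces of $f^t$ into $\Theta(N)$ ``tents'' of widths $w_i$ (with $\sum_i w_i = y - x$) and amplitudes $h_i$. A pigeonhole argument then says that since $g$ has at most $(2u)^l \le \rho^t$ affine pieces, its breakpoints lie in at most $(2u)^l$ tents; with slack absorbed into the $\Theta$ constants, a positive fraction of tents---call them \emph{good}---are breakpoint-free, so $g$ restricts to a single affine function on each. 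Since $f^t$ looks like a (possibly asymmetric) triangle on such a tent while $g$ is a line, an elementary estimate gives $\int_{\text{tent}_i}|f^t - g|\,dz \ge c_0\, w_i h_i$ for an absolute constant $c_0 > 0$: the best affine approximation to a triangle leaves a fixed fraction of its area as $L^1$ error.

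The crux of the proof is to show $\sum_{i \in \mathrm{good}} w_i h_i = \Omega(1)$ uniformly in $t$. This is where the matching condition $L = \rho$ enters. Packing $\Theta(\rho^t)$ oscillations into an interval of length $y - x$ forces typical tent widths to be $\Theta((y - x)/\rho^t)$, and the Lipschitz bound $h_i \le \rho^t w_i$ is simultaneously saturated by the construction underlying Theorem~\ref{th:intro}, yielding $h_i = \Theta(\rho^t w_i)$. Cauchy--Schwarz with $\sum_i w_i = y - x$ and $|\mathrm{good}| = \Theta(\rho^t)$ then gives
\[
\sum_{i \in \mathrm{good}} w_i h_i \;\gtrsim\; \rho^t \cdot \frac{(y - x)^2}{N} \;=\; \Omega\!\big((y - x)^2\big),
\]
which, combined with the per-tent lower bound, produces the desired constant $c(x,y)$ independent of $t$.

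The main obstacle I anticipate is the amplitude-saturation step: turning ``$f^t$ has $\Theta(\rho^t)$ oscillations and is $\rho^t$-Lipschitz'' into a genuine \emph{lower} bound on the heights $h_i$. The Lipschitz inequality on its own only provides the upper direction, so the proof must inject structural information from Theorem~\ref{th:intro} about the orbit-based construction, ensuring that a constant fraction of the oscillations achieve amplitude proportional to $\rho^t w_i$. Once this structural lower bound is in place, the remainder is a direct application of the Telgarsky-style piece-counting and triangle-vs-line estimate.
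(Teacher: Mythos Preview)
Your high-level template (pigeonhole on the $\le (2u)^l$ affine pieces of $g$, then a per-tent $L^1$ estimate) matches the paper's, but two key steps are off.

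First, the ``amplitude-saturation'' obstacle you flag is a phantom. By the paper's definition of oscillations (Definition~\ref{def:crossings}), each of the $\Theta(\rho^t)$ crossings already sends $f^t$ from the value $x$ to the value $y$; the amplitude of every such piece is automatically at least $y-x$, so there is nothing to lower-bound and no structural input from Theorem~\ref{th:intro} is needed.

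Second, and this is the real gap, your per-tent estimate $\int_{\text{tent}_i}|f^t-g|\ge c_0\,w_i h_i$ is not valid: $f^t$ is not a triangle on a tent, and a Lipschitz function of amplitude $h_i$ on an interval of width $w_i$ can have area far smaller than $w_ih_i$ (a narrow spike). The paper's fix uses the Lipschitz constant in the \emph{opposite} direction from what you attempt. Since $f^t$ is $L^t$-Lipschitz and reaches $y$ (resp.\ $x$) somewhere in the tent, it must stay above $\tfrac{x+y}{2}+\tfrac{y-x}{4}$ (resp.\ below $\tfrac{x+y}{2}-\tfrac{y-x}{4}$) on a sub-interval of length at least $\tfrac{y-x}{2L^t}$, giving the per-tent bound $\int_U\big|f^t-\tfrac{x+y}{2}\big|\ge \tfrac{(y-x)^2}{8L^t}$ (Claim~\ref{claim:mikro}). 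On tents where the thresholded functions $\tilde h=\mathbf{1}[f^t\ge\tfrac{x+y}{2}]$ and $\tilde g=\mathbf{1}[g\ge\tfrac{x+y}{2}]$ disagree everywhere, one has $|f^t-g|\ge\big|f^t-\tfrac{x+y}{2}\big|$, so this bound transfers to $|f^t-g|$. Summing over the $\Theta(\rho^t)$ disagreeing tents gives $\Theta\!\big((\rho/L)^t(y-x)^2\big)$, and \emph{this} is where $L=\rho$ enters: it makes the product of ``number of tents'' and ``area per tent'' constant in $t$. Your Cauchy--Schwarz route via $\sum w_ih_i$ and the claimed saturation $h_i=\Theta(\rho^t w_i)$ is neither needed nor justified.
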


The above theorem implies depth separations, since if the depth of the ``shallow'' network $g$ is $l=o(t)$, then even exponential width $u$ will not suffice for a good approximation.

Given the above understanding regarding the Lipschitz constant, the periods and the number of oscillations, it is now easy to construct infinite families of functions that are tight in the sense that they achieve the depth-width trade-offs bounds promised by our theorem for any period $p$ (see \hyperref[lem:example]{Lemma}~\ref{lem:example}).

Observe that the largest root of the polynomial in the statement of \hyperref[th:intro]{Theorem}~\ref{th:intro} is always larger than $\sqrt2$. This implies a sharp transition for the depth-width trade-offs, since the oscillations growth rate will be at least $\sqrt2$, whenever $f$ contains an odd period. Previous results, only acquired a base in the exponent that would approach 1, as the (odd) period $p$ increased, and it is known that if $f$ does not contain odd factors in its prime period, then the oscillations can grow only polynomially quickly~\citep{ICLR}.

Finally, in our experimental section we give a simple regression task based on periodic functions, that validates our obtained $L^1$ bound and we also demonstrate how the error drops as we increase the depth.

\medskip

\section{Preliminaries}\label{sec:prelim}

In this section we provide some important definitions and facts that will be used for the proofs of our main results. First we define the notion of crossings/oscillations.

\begin{definition}[Crossings/Oscillations]\label{def:crossings} A continuous function $f: [a,b]\to [a,b]$ crosses the interval $[x,y]$ with $x,y \in [a,b]$ if there exist $c,d\in[a,b]$, such that $f(c)=x$ and $f(d)=y$. Moreover we denote $\textrm{C}_{x,y}(f)$ the number of times $f$ crosses $[x,y]$. It holds $\textrm{C}_{x,y}(f)=t$ if there exist numbers $a_1,b_1<a_2,b_2<\ldots<a_t,b_t$ in $[a,b]$ so that $f(a_i) = x$ and $f(b_i)=y$ for all $1 \leq i \leq t$.
\end{definition}

We next mention the definition of covering relation between two intervals $I_1, I_2$. This notion is crucial because as we shall see later, it enables us to define a graph and analyze the spectral properties of its adjacency matrix. Bounding  the spectral norm of the adjacency matrix from below will enable us to give lower bounds on the number of crossings/oscillations.

\begin{definition}[Covering relation]\label{def:covering}
Let $f$ be a function and $I_1 , I_2$ be two closed intervals. We say that $I_1$ covers $I_2$ under $f$, denoted by $I_1 \xrightarrow{f} I_2$ whenever $I_2 \subseteq f(I_1).$
\end{definition}

We conclude this section with the definition of $L^1$ and classification error.

\begin{definition}[$L^1$ error]
For two functions $f,g: [a,b]\to [a,b]$, their $L^1$ distance is: $$
\int_{[a,b]}|f(x)-g(x)| dx.$$
\end{definition}

\begin{definition}[Classification error]
If we specify a collection of $n$ points $(x_i,y_i)_{i=1}^n$ with $y_i \in\{0,1\}$, one can define the classification error of a function $g$ to be: $$\mathcal{R}(g) = \frac{1}{n} \sum_{i=1}^n \textbf{1}[\tilde{g}(x_i) \neq y_i],$$ where
$\tilde{g}(z)=\textbf{1}[g(z)\ge v]$ is the thresholded value of $g$ based on some chosen threshold $v$ (e.g., $v$ could be $\tfrac12$).
\end{definition}

\section{Lipschitz constant and Oscillations}
In this section, we provide characterizations of continuous $L$-Lipschitz functions $f:[a,b] \to [a,b]$, the compositions of which cannot be approximated (in terms of $L^1$ error) by shallow NNs. For the rest of this section, we assume that there exist $x,y\in[a,b]$ with $x<y$, such that the number of oscillations is:
$\textrm{C}_{x,y}(f^t)\ge C\rho^t, \forall t\in\mathbb{N}$, where $\rho$ is a constant greater than one (we shall call $\rho$ the growth rate of the oscillations) and $C$ is some positive constant.

The lemma below formalizes the idea that a highly oscillatory function needs to have large Lipschitz constant, by showing that $L \geq \rho$.

\begin{lemma}[Lower bound on $L$]\label{lem:lowerbound} Let $f :[a,b] \to [a,b]$ be as above. It holds that $L^t$ is at least $C' \rho^t$, where $C'$ is another positive constant.
\end{lemma}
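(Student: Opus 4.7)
The plan is to exploit the elementary fact that the $t$-fold composition $f^t$ is $L^t$-Lipschitz, and combine this with the hypothesis $C_{x,y}(f^t) \ge C\rho^t$ via a packing argument inside the interval $[a,b]$.

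First I would unpack the definition of $C_{x,y}(f^t)$. By \hyperref[def:crossings]{Definition}~\ref{def:crossings}, there exist $N \ge C\rho^t$ pairs $(a_i,b_i)$ with the pairs strictly ordered along the real line, with $f^t(a_i)=x$ and $f^t(b_i)=y$. For each $i$, let $I_i$ be the closed interval with endpoints $a_i$ and $b_i$. Because the pairs are ordered (both coordinates of pair $i$ sit strictly to the left of both coordinates of pair $i+1$), the intervals $I_1, I_2, \ldots, I_N$ have pairwise disjoint interiors and are all contained in $[a,b]$.

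Next, the iterate $f^t$ is $L^t$-Lipschitz as a composition of $L$-Lipschitz maps. Applied to the endpoints of $I_i$, this yields
\[
|y-x| \;=\; |f^t(b_i)-f^t(a_i)| \;\le\; L^t\,|b_i-a_i| \;=\; L^t\,|I_i|,
\]
so each subinterval obeys $|I_i| \ge |y-x|/L^t$. Summing over all $N$ essentially disjoint subintervals contained in $[a,b]$ gives the packing inequality
\[
N\cdot \frac{|y-x|}{L^t} \;\le\; \sum_{i=1}^{N} |I_i| \;\le\; b-a.
\]

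Rearranging and substituting the lower bound $N \ge C\rho^t$ on the number of crossings yields
\[
L^t \;\ge\; \frac{N\,|y-x|}{b-a} \;\ge\; \frac{C\,|y-x|}{b-a}\cdot \rho^t,
\]
which is the desired conclusion with $C' := C|y-x|/(b-a) > 0$. There is no serious obstacle here; the only thing to be careful about is verifying that the intervals $I_i$ really are (essentially) disjoint under the ordering convention of \hyperref[def:crossings]{Definition}~\ref{def:crossings}, which is immediate since the convention forces both endpoints of the $i$-th pair to lie strictly to the left of both endpoints of the $(i{+}1)$-st pair, regardless of whether $a_i<b_i$ or $b_i<a_i$ within a pair.
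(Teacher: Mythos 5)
Your proof is correct and follows essentially the same strategy as the paper's: use the fact that $f^t$ is $L^t$-Lipschitz to lower bound the width of each oscillation by $(y-x)/L^t$, then pack those widths into $[a,b]$ to deduce $L^t \ge C'\rho^t$. The only cosmetic difference is that you sum lengths of the disjoint intervals $[a_i,b_i]$ coming directly from the pairs in Definition~\ref{def:crossings}, whereas the paper orders all the alternating crossing points $a_0<a_1<\cdots<a_n$ and telescopes $\sum(a_{i+1}-a_i)\le b-a$; both yield the same bound and the same constant $C' = C(y-x)/(b-a)$.
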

\begin{proof}
Without loss of generality let $n$ be even, and let it denote the number of oscillations between $x,y$ of the function $f^t$, i.e., $n \geq C \rho^t$. Let $a\leq a_0< a_1<\ldots<a_n \leq b$ be the points such that $f^t(a_{2r+1}) = x$ and $f^t(a_{2r})=y$ for $0 \leq r \leq \tfrac n2$. Since $f^t$ has Lipschitz constant $L^t$, it holds that
$\frac{y-x}{L^t} \leq a_{i+1} - a_{i}$ for all $0 \leq i \leq n-1$. By adding these inequalities, we get a telescoping sum, and we conclude:
\[
\frac{n\cdot(y-x)}{L^t} \leq  \sum_{i=0}^{n-1} (a_{i+1} - a_i) = a_n - a_1 \leq b - a.
\]

Therefore $L^t \geq \frac{(y-x) n}{b-a} \geq C' \rho^t$, where $C'=C\cdot\frac{(y-x)}{b-a}$ is a positive constant.
\end{proof}

An immediate corollary of \hyperref[lem:lowerbound]{Lemma}~\ref{lem:lowerbound} is $L\geq \rho$ as desired.
\subsection{Lipschitz matches oscillations rate for $L^1$ error}
In this section, we give sufficient conditions for a class of functions $f$, so that it cannot be approximated (in $L^1$ sense) by shallow ReLU NNs, and we will later extend it to semi-algebraic gates. The key statement is that the Lipschitz constant of such a function \textit{should match} the growth rate of the number of oscillations.

Assume that $g: [a,b] \to [a,b]$ is a neural network with $l$ layers and $u$ nodes (activations) per layer. It is known that a ReLU NN with $u$ ReLU's per layer and with $l$ layers is piecewise affine with at most $(2u)^l$ pieces~\citep{telgarsky15}.

From now on, let $h := f^{t}$ for ease of presentation. We define as $\tilde{h}(z) = \textbf{1}[h(z) \geq \frac{x+y}{2}]$ and $\tilde{g}(z) = \textbf{1}[g(z) \geq \frac{x+y}{2}]$ for some chosen values of $x,y\in[a,b]$ to be defined later (as we shall see, $x,y$ are just points for which $h$ oscillates between them). Let $\mathcal{I}_{h,x,y}, \mathcal{I}_{g,x,y}$ be the partition of $[a,b]$, where $\tilde{h} , \tilde{g}$ are piecewise constant respectively. We also define $\tilde{\mathcal{J}}_{h,x,y} \subseteq \mathcal{I}_{h,x,y}$ the collection of intervals with the extra assumption that there exists $w$ in each of them such that $h(w)=y$ or $h(w)=x$. Finally define a maximal (in cardinality) sub-collection of intervals $\mathcal{J}_{h,x,y} \subseteq \tilde{\mathcal{J}}_{h,x,y}$ in such a way that if $U_1, U_2$ are consecutive intervals in $\tilde{\mathcal{J}}_{h,x,y}$, the image $h(U_1)$ contains $x$ and the image of $h(U_2)$ contains $y$ (or vice-versa), that is there is an alternation between $x,y$.
It follows~\citep{telgarsky15} that
\begin{equation}
\frac{1}{|\mathcal{J}_{h,x,y}|} \sum_{U \in \mathcal{J}_{h,x,y}}\textbf{1}[\forall z\in U. \tilde{h}(z) \neq \tilde{g}(z)] \geq \frac{1}{2}\left(1 - 2\frac{|\mathcal{I}_{g,x,y}|}{|\mathcal{J}_{h,x,y}|}\right).
\end{equation}

Moreover, one can show the following claim for any interval $U \in \mathcal{J}_{h,x,y}$, and we will use this later:
\begin{claim}\label{claim:mikro}
Let $U \in \mathcal{J}_{h,x,y}$, then \[\int_{U} \left|h(z) - \frac{x+y}{2}\right| dz \geq \tfrac{(y-x)^2}{8L^{t}}.\]
\end{claim}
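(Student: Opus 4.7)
The plan is to exploit two facts: $h = f^t$ is $L^t$-Lipschitz, and by definition of $\mathcal{J}_{h,x,y}$ the interval $U$ contains a point $w$ with $h(w) \in \{x,y\}$. Since $h(w)$ lies a distance $(y-x)/2$ from the threshold $(x+y)/2$, the Lipschitz property forces $h$ to stay on the same side of the threshold throughout a neighborhood of $w$ of radius
\[
R := \frac{y-x}{2L^t},
\]
and moreover with a pointwise gap that decays linearly in the distance from $w$.

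Concretely, I would assume $h(w) = y$ (the case $h(w) = x$ is completely symmetric). The Lipschitz bound then gives
\[
h(z) - \tfrac{x+y}{2} \;\geq\; \tfrac{y-x}{2} - L^t|z-w|
\]
for every $z \in [a,b]$, and the right-hand side is nonnegative precisely when $|z-w| \leq R$. Because $U$ is a maximal interval on which $\tilde h \equiv 1$, i.e.\ on which $h \geq (x+y)/2$, the connected neighborhood $[w-R, w+R] \cap [a,b]$ is automatically contained in $U$.

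I would then finish by integrating the above linear lower bound. Note that $R \leq (b-a)/2$, since $L^t \geq \rho^t \geq 1$ by \hyperref[lem:lowerbound]{Lemma}~\ref{lem:lowerbound} and $y-x \leq b-a$; hence at least one of $[w-R, w]$ or $[w, w+R]$ lies entirely inside $[a,b]$, and therefore inside $U$. Integrating over such a half-interval of length $R$ yields
\[
\int_U \Bigl|h(z) - \tfrac{x+y}{2}\Bigr|\, dz \;\geq\; \int_0^R \Bigl(\tfrac{y-x}{2} - L^t r\Bigr)\, dr \;=\; \frac{(y-x)^2}{8L^t},
\]
which is exactly the claim.

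The only point that requires real care is the edge case where $w$ sits near $a$ or $b$: this is precisely why only a half-neighborhood is guaranteed to fit in $[a,b]$, and hence why the constant in the claim is $1/8$ rather than the $1/4$ one would obtain from integrating over a full two-sided neighborhood. I do not anticipate any deeper obstacle; the argument is essentially a one-dimensional Lipschitz estimate, and the role of $\mathcal{J}_{h,x,y}$ is entirely captured by supplying the witness $w$.
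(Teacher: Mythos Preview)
Your argument is correct and rests on the same idea as the paper's: use the witness $w\in U$ with $h(w)\in\{x,y\}$ together with the $L^t$-Lipschitz bound to show that $|h-(x+y)/2|$ is bounded below on a neighborhood of $w$, then integrate. The execution differs slightly. The paper invokes the intermediate value theorem to locate $c<w<d$ in $U$ with $h(c)=h(d)=\tfrac{3y+x}{4}$, bounds the integrand below by the constant $\tfrac{y-x}{4}$ on $[c,d]$, and uses Lipschitzness to get $d-c\geq\tfrac{y-x}{2L^t}$ (a ``rectangle'' estimate using both sides of $w$). You instead integrate the linear lower bound $\tfrac{y-x}{2}-L^t|z-w|$ directly (a ``triangle'' estimate) over a single half-neighborhood of length $R$. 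Both routes land on the same constant $\tfrac{(y-x)^2}{8L^t}$. Your one-sided version has the merit of cleanly covering the case where $w$ sits near the boundary $a$ or $b$, a case the paper's two-sided argument does not explicitly address.
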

\begin{proof}
Firstly, observe that $h$ is Lipschitz with constant $L^{t}$ by definition and without loss of generality let's assume $x<y$. In what follows, we make use of the intermediate value theorem for continuous functions.

First we consider the case where there exists a $w \in U$ such that $h(w)=y$.

Let $c<d$, with $c,d \in U$ so that $h(c) = h(d) = \frac{y}{2} + \frac{x+y}{4}$ and $h(z)\geq \frac{y}{2} + \frac{x+y}{4}$ for $z \in [c,d]$ and $w \in [c,d]$ with $f(w)=y$.
It is clear that
\[\int_{U} \left|h(z) - \frac{x+y}{2}\right|dz \geq  \frac{y-x}{4} (d-c).\]
Finally, by the fact that $h$ is Lipschitz with constant $L^{t}$, it follows that $(d-c) = (d-w)+(w-c) \geq \frac{y-x}{4L^{t}}+\frac{y-x}{4L^{t}} = \frac{y-x}{2L^{t}}$. The claim for the case there exists $w \in U$ with $h(w)=y$ follows by substitution. See also \hyperref[fig:lemma_figs]{Figure~\ref{fig:lemma_figs}}.

Similarly, we consider the case in which there exists a $w \in U$ such that $h(w)=x$. Let $c<d$, with $c,d \in U$ so that $h(c) = h(d) = \frac{x}{2} + \frac{x+y}{4}$ and $h(z)\leq \frac{x}{2} + \frac{x+y}{4}$ for $z \in [c,d]$ and $w \in [c,d]$ with $f(w)=x$.
It is clear that
\[\int_{U} \left|h(z) - \frac{x+y}{2}\right|dz \geq  \frac{y-x}{4} (d-c).\]
Again using the fact that $h$ is Lipschitz with constant $L^{t}$, it follows that $(d-c) = (d-w)+(w-c) \geq \frac{y-x}{4L^{t}}+\frac{y-x}{4L^{t}} = \frac{y-x}{2L^{t}}$. The claim for the case in which there exists a $w \in U$ such that $h(w)=x$ follows by substitution.
\end{proof}

\begin{figure}
	\centering
	\subfigure[An illustration of the area computed by the integral in the case of a piecewise concave function (red curve). However note that composition of concave functions may not be concave.]{\includegraphics[width = 2.8in]{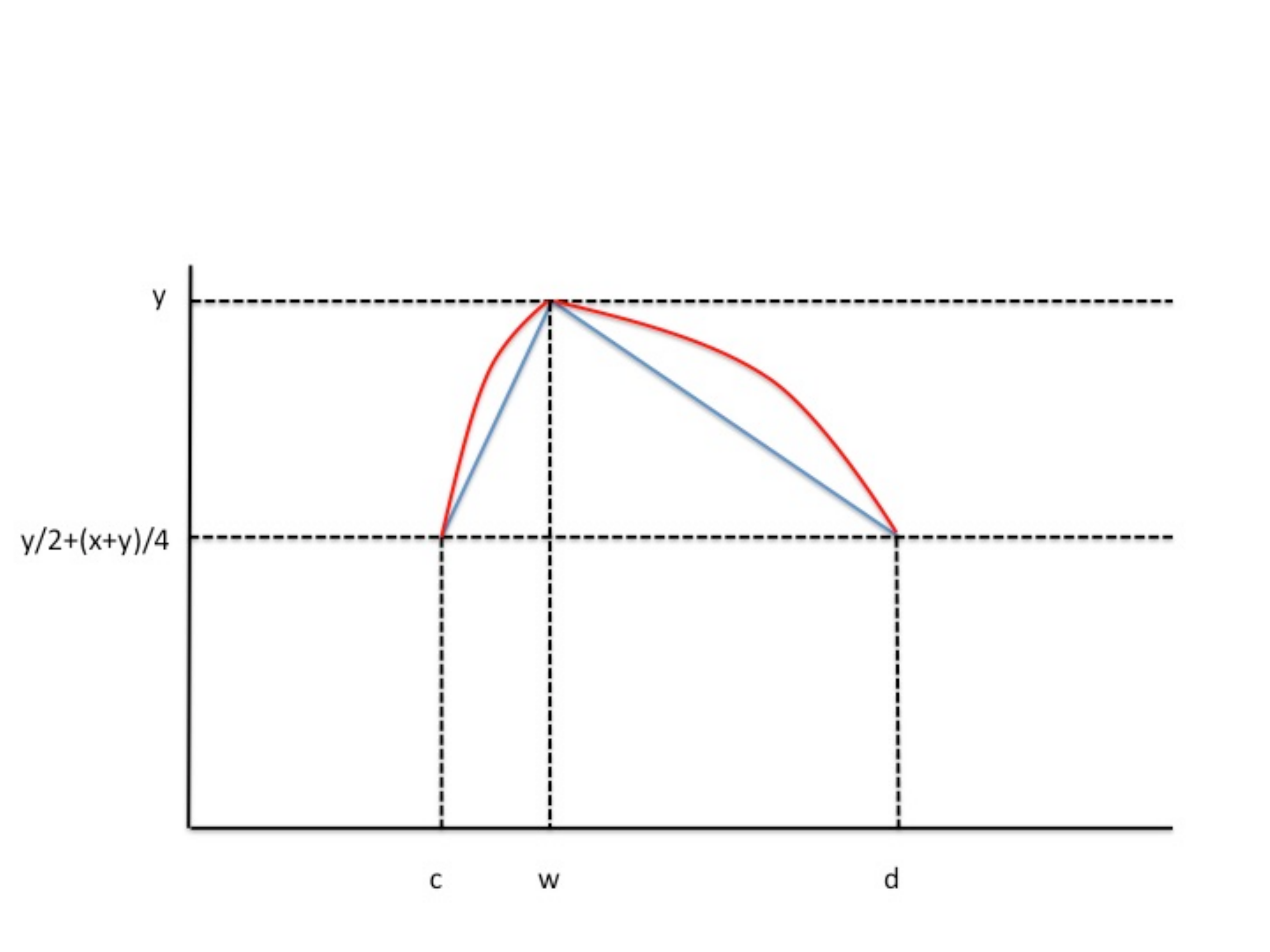}}
	\hspace{2cm}
	\subfigure[The lower  bound for the integral would not necessarily hold in the case of piecewise convex functions, as the area could become arbitrarily small.]{\includegraphics[width = 2.8in]{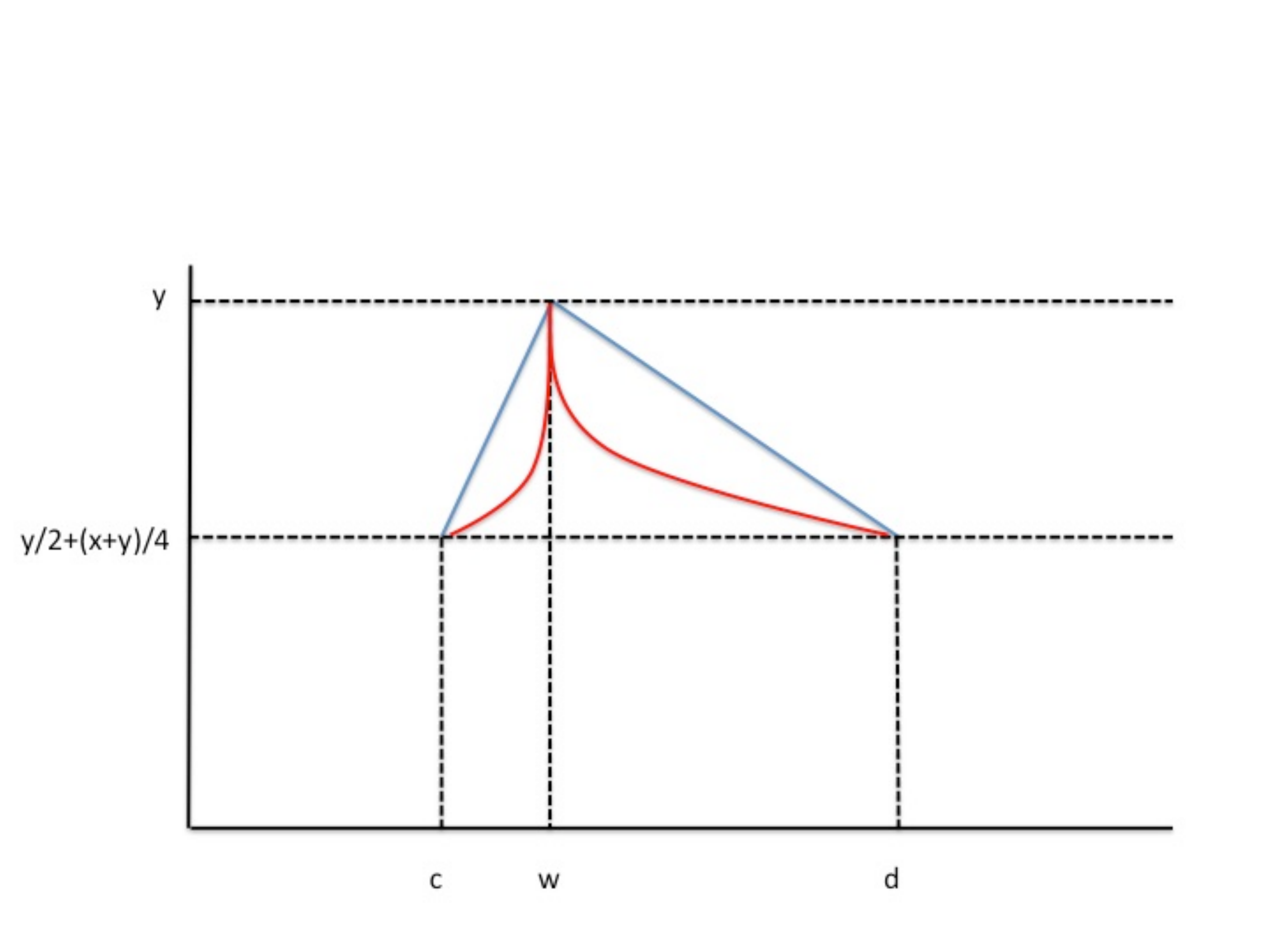}}
	\caption[]
	{\small Graphs demonstrating the proof described in \hyperref[claim:mikro]{Claim \ref{claim:mikro}} with respect to the Lipschitz constant. The lower bound on the area computed by the integral would hold even if the repeated compositions created piecewise concave functions (left); however, for convex functions no guarantee on the area of the triangle (hence no $L^1$ separation) can be derived (right).}
	\label{fig:lemma_figs}
\end{figure}

As mentioned in the beginning, a sufficient condition for a function $f$ to be hard-to-represent, is that its Lipschitz constant should be exactly equal to the base in the growth rate of the number of oscillations. In Telgrasky's paper, the function used was the tent map that has Lipschitz constant equal to 2, and the oscillations growth rate under repeated compositions also had growth rate $2$. This is not a coincidence and here we generalize this observation.

\begin{theorem}[Lipschitz matches oscillations]\label{lem:match} Let $f: [a,b] \to [a,b]$ be $L$-Lipschitz, and $g$ be any ReLU NN with $u$ units per layer and $l$ layers. Suppose there exist $x,y$ such that the oscillations of $f^t$ between $x,y$ are $\Theta(\rho^t)$ for some constant $\rho>1$. As long as $L = \rho$ (by \hyperref[lem:lowerbound]{Lemma}~\ref{lem:lowerbound} we already know that $L \geq \rho$) and $(2u)^l \leq \frac{\rho^t}{8}$, then we get the desired $L^1$-separation:
\[\min_g \int_{a}^b |f^t(z) - g(z)|dz \geq c(x,y) > 0,\] where $c(x,y)$ depends on $x,y$ but \emph{not} on $t$.
\end{theorem}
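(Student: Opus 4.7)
The plan is to combine the displayed inequality just preceding \hyperref[claim:mikro]{Claim~\ref{claim:mikro}} with \hyperref[claim:mikro]{Claim~\ref{claim:mikro}} itself to lower bound the $L^1$ error on the subcollection $\mathcal{J}_{h,x,y}$. First, I would recall that a ReLU network $g$ with width $u$ and depth $l$ is piecewise affine with at most $(2u)^l$ pieces, which immediately implies $|\mathcal{I}_{g,x,y}| \leq (2u)^l$. From the hypothesis that the oscillations of $f^t$ between $x$ and $y$ are $\Theta(\rho^t)$, we obtain $|\mathcal{J}_{h,x,y}| \geq c_0\,\rho^t$ for some constant $c_0 > 0$ that may depend on $x,y$ but not on $t$.

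Next, I would plug the assumption $(2u)^l \leq \rho^t/8$ into the preliminary inequality. Absorbing constants so that $c_0 \geq 1/2$, we get $|\mathcal{I}_{g,x,y}|/|\mathcal{J}_{h,x,y}| \leq 1/(4c_0) \leq 1/2$, and the inequality then guarantees that at least
\[
N_0 \;\geq\; \frac{|\mathcal{J}_{h,x,y}| - 2(2u)^l}{2} \;\geq\; \frac{\rho^t}{8}
\]
of the intervals $U \in \mathcal{J}_{h,x,y}$ satisfy $\tilde{h}(z) \neq \tilde{g}(z)$ for every $z \in U$.

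The key pointwise observation is that on any such $U$ the functions $h$ and $g$ lie on opposite sides of the threshold $(x+y)/2$, so $|h(z) - g(z)| \geq |h(z) - (x+y)/2|$ throughout $U$. Applying \hyperref[claim:mikro]{Claim~\ref{claim:mikro}} on each such interval and summing gives
\[
\int_a^b |f^t(z) - g(z)|\,dz \;\geq\; N_0 \cdot \frac{(y-x)^2}{8\,L^t} \;\geq\; \frac{\rho^t}{8} \cdot \frac{(y-x)^2}{8\,L^t}.
\]
Since $L = \rho$ by assumption, the factor $\rho^t / L^t$ equals $1$, and the right-hand side reduces to the constant $c(x,y) = (y-x)^2 / 64$, independent of $t$, as claimed.

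The main obstacle will be the constant bookkeeping in the first two steps: one has to verify that the specific coefficient $1/8$ in $(2u)^l \leq \rho^t/8$ is correctly calibrated against the hidden $\Theta$-constant on the oscillation count so that the preliminary inequality genuinely delivers a positive fraction of ``bad'' intervals. The conceptual crux, however, is the tight matching $L = \rho$: if $L$ strictly exceeded $\rho$, the factor $\rho^t / L^t$ in the final display would decay exponentially in $t$ and no $t$-independent lower bound could survive. This is precisely why the lower bound $L \geq \rho$ from \hyperref[lem:lowerbound]{Lemma~\ref{lem:lowerbound}} being sharp is essential for the argument to produce an $L^1$ separation rather than merely a classification separation.
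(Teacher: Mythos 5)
Your proposal is correct and follows essentially the same route as the paper: lower-bound the number of intervals $U \in \mathcal{J}_{h,x,y}$ on which $\tilde h \neq \tilde g$ via the displayed counting inequality, apply Claim~\ref{claim:mikro} on each such interval, sum, and use $L = \rho$ to cancel $\rho^t/L^t$. A small slip: with $|\mathcal{I}_{g,x,y}| \leq \rho^t/8$ and $|\mathcal{J}_{h,x,y}| \geq c_0\rho^t$ the ratio is $\leq 1/(8c_0)$, not $1/(4c_0)$, though the conclusion is unaffected, and the constant-bookkeeping issue you flag regarding the hidden $\Theta$-constant is present (and glossed over) in the paper's own proof as well.
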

\begin{proof}
We will prove a lower bound for the $L^1$ distance between $h := f^t$ and an arbitrary $g$ from the aforementioned family of NNs with $(2u)^l \leq \frac{\rho^t}{8}$.
\begin{align*}
&\int_{a}^b \left|h(z)-g(z)\right|dz = \sum_{U \in \mathcal{I}_{h,x,y}} \int_U \left|h(z)-g(z)\right|dz
\\&\geq \sum_{U \in \mathcal{J}_{h,x,y}} \int_U \left|h(z)-g(z)\right|dz
\\&\geq \sum_{U \in \mathcal{J}_{h,x,y}} \int_U \left|h(z) - \frac{x+y}{2}\right|\textbf{1}[\forall z\in U. \tilde{h}(z) \neq \tilde{g}(z)]dz
\\&\geq \frac{|\mathcal{J}_{h,x,y}|(y-x)^2}{16L^{t}} \left(1 - 2\frac{|\mathcal{I}_{g,x,y}|}{|\mathcal{J}_{h,x,y}|}\right).
\end{align*}
It is clear that $|\mathcal{J}_{h,x,y}|$ is at least the number of crossings $\textrm{C}(f^{t})$, hence we conclude that $|\mathcal{J}_{h,x,y}|$ is $\Theta(\rho^t)$. It follows that
\[\int_{a}^b \left|h(z)-g(z)\right|dz \geq \frac{\left(\frac{\rho}{L}\right)^t(y-x)^2}{16}\left(1 - 2\frac{|\mathcal{I}_{g,x,y}|}{\rho^t}\right),\] where $$|\mathcal{I}_{g,x,y}| \leq (2u)^l.$$
Thus, as long as $(2u)^l \leq \frac{\rho^t}{8}$, and since $L = \rho$, we conclude that \[\int_{a}^b \left|h(z)-g(z)\right|dz \geq \frac{(y-x)^2}{32}.\]
\end{proof}

\begin{remark}[Larger Lipschitz:] Observe that if we didn't require that $L=\rho$ and instead $L>\rho$, no meaningful $L^1$ guarantee could be derived since the term $(\frac{\rho}{L})^t$ would shrink for large $t$ (see also \hyperref[fig:lemma_figs]{Figure~\ref{fig:lemma_figs}}).
\end{remark}

\begin{remark}[Semi-algebraic activation units:] Our results can be easily generalized for the general class of semi-algebraic units (see \cite{Telgarsky16} for definitions). The idea works as follows: Any neural network that has activation units that are semi-algebraic, it is piecewise polynomial, therefore piecewise monotone (the pieces depend on the degree of the polynomial, which in turn depends on the specifications of the activation units). Therefore, the function $\tilde{g}$ (as defined above) is piecewise constant and defines a partition of the domain $[a,b]$. The crucial observation is that the size of this partition is bounded by a number that depends exponentially on the number of layers (i.e., layers appear in the exponent) and polynomially on the number of units per layer (i.e., width is in the base). Finally, our results can be applied for the multivariate case. As in~\cite{Telgarsky16}, we handle this case by first choosing an affine map $\mu : \mathbb{R}\to\mathbb{R}^d$ (meaning $\mu (z) = \kappa z + \nu$) and considering functions $f^t \circ \mu$.
\end{remark}

\subsection{Periodicity and Lipschitz constant}
In this subsection, we improve the result of \cite{ICLR}, by showing that functions $f$ of odd period $p>1$ have points $x<y$ so that the number of oscillations between $x$ and $y$ is $\rho^t$, where $\rho$ is the root greater than one of the polynomial equation \[z^{p-1} - z^{p-2} - \sum_{j=0}^{p-3} (-z)^j= \frac{z^p - 2z^{p-2}- 1}{z+1}=0.\]

This consists an improvement from the previous result in \cite{ICLR} that states that the growth rate of the oscillations of compositions of a function with $p$-periodic point is the root greater than one of the polynomial $z^{p-1} - z^{p-2} -1 =0$ (observe that the two aforementioned polynomials coincide for $p=3$). This is true because if $\rho, \rho'$ are the roots of $\frac{z^p - 2z^{p-2}- 1}{z+1}$ and $z^{p-1} - z^{p-2} -1$, then $\rho >\rho'$, unless $p=3$ for which we have $\rho = \rho'$. This gives better depth-width trade-offs for any value of the (odd) period.

Moreover, if $\rho$ is the Lipschitz constant of $f$, then \hyperref[lem:match]{Lemma}~\ref{lem:match} applies and any shallow neural network $g$ (with $(2u)^l \leq \frac{\rho^t}{8}$) has $L^1$ distance bounded away from zero for any number of compositions of $f$.

We first need the following structural lemma~\cite{german}.
\begin{lemma}[Monotonicity \cite{german}]\label{lem:monotonicity} Let $p>1$ be an odd number and consider $f:[a,b] \to [a,b]$ with prime period $p$. Then there exists a cycle of period $p$ with points $\{x_1,...,x_p\}$ such that
\[x_p < x_{p-2} < ... < x_3 < x_1 < x_2 < x_4 <...< x_{p-1}.\]
\end{lemma}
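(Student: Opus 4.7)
My plan is to produce the required cycle by the classical anchor-point construction from combinatorial dynamics and then verify the zig-zag ordering by induction, with the primality of $p$ entering through Sharkovsky's theorem to rule out all orderings other than the stated one.

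First I would choose the anchor. Fix any cycle $C$ of prime period $p$ and consider the set of \emph{right-movers} $R = \{x \in C : f(x) > x\}$. Since $\min C$ cannot be a fixed point (else it would have period $1$, not $p$), we have $\min C \in R$, so $R \neq \emptyset$; on the other hand $\max C \notin R$. Let $x_1$ be the largest element of $R$, and define $x_{k+1} = f(x_k)$ for $k = 1, \ldots, p-1$; these are exactly the $p$ distinct points of $C$ listed in orbit order.

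Next I would establish the base of the induction. By construction $x_2 = f(x_1) > x_1$. Every point of $C$ strictly larger than $x_1$ is a left-mover (else the maximality of $x_1$ in $R$ would be violated), hence $f(x_2) < x_2$, i.e., $x_3 < x_2$; a short argument via the intermediate value theorem together with the absence of shorter odd cycles shows $x_3 < x_1$, giving the three-point pattern $x_3 < x_1 < x_2$. Proceeding inductively, I would show that at step $2k$ the placed points satisfy $x_{2k-1} < x_{2k-3} < \cdots < x_1 < x_2 < \cdots < x_{2k}$, and that $x_{2k+1} = f(x_{2k})$ must land strictly to the left of $x_{2k-1}$, while $x_{2k+2} = f(x_{2k+1})$ must land strictly to the right of $x_{2k}$. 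After $p$ such steps, since $p$ is odd, the chain terminates with $x_p$ being the leftmost point, giving exactly the stated order.

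The main obstacle is the inductive step, which does not follow from continuity alone: the zig-zag placement must be forced by the primality of the period. I would handle it using the covering-relation graph from \hyperref[def:covering]{Definition}~\ref{def:covering}: label the intervals between consecutive already-placed points as $J_1, J_2, \ldots$, observe via the intermediate value theorem that each $J_i$ covers certain other $J_j$'s under $f$, and show that any placement of $x_{2k+1}$ or $x_{2k+2}$ violating the zig-zag creates a loop in this covering graph of odd length $q$ with $1 < q < p$. By the standard lemma that a loop of length $q$ in the covering graph produces a periodic point of period dividing $q$, this would force $f$ to have a periodic orbit of some period $q'$ with $q' \triangleright p$ in Sharkovsky's ordering, contradicting the assumption that $p$ is the prime period. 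The delicate bookkeeping in tracking which intervals cover which, and how a misplaced orbit point creates a forbidden loop, is the real content of the argument; once it is in place, the zig-zag ordering follows mechanically.
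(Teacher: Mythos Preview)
The paper does not prove this lemma; it is quoted verbatim as a known structural result from the reference \cite{german} and used as a black box to set up the covering graph of \hyperref[cor:covering]{Corollary}~\ref{cor:covering}. So there is no ``paper's own proof'' to compare your attempt against.

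That said, the route you outline is the standard one (essentially Stefan's argument underlying most textbook proofs of Sharkovsky's theorem): anchor at the largest right-mover, iterate, and use covering relations together with the minimality of $p$ among odd periods to force each new point to the correct side. Your sketch is on the right track, but one point deserves care. A closed walk of length $q$ in the covering graph only yields a fixed point of $f^q$, not automatically a genuine period-$q$ orbit; you must argue that this fixed point is not already a fixed point of some $f^{q'}$ with $q' \mid q$, $q' < q$ (in particular, not a fixed point of $f$). In the classical argument this is handled by ensuring the forbidden loop is \emph{primitive} and passes through the central interval $[x_1,x_2]$ exactly once, so that the resulting periodic point follows a non-repetitive itinerary. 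Without that refinement the contradiction with primality of $p$ does not close, since a fixed point of $f$ would give period $1$, which is Sharkovsky-smaller than $p$ and hence no contradiction. You flag this as ``delicate bookkeeping,'' which is fair, but it is precisely where the argument lives.
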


This lemma will help us define an appropriate covering relation, to be used later in order to bound the number of oscillations in $f^t$. Towards this goal, we set $I_0 = [x_1,x_2]$, $I_{j} = [x_{2j}, x_{2j+2}]$ for $1\leq j\leq \frac{p-3}{2}$ and $J_{j} = [x_{2j+1},x_{2j-1}]$ for $1 \leq j \leq \frac{p-1}{2}$. From \hyperref[lem:monotonicity]{Lemma}~\ref{lem:monotonicity}, we trivially have the following covering relations.
\begin{corollary}\label{cor:covering}
It holds that
\begin{itemize}
\item $I_0 \to I_0 \cup J_1.$
\item $I_j \to J_{j+1}$, for $1 \leq j \leq \frac{p-3}{2}$.
\item $J_j \to I_j$, for $1 \leq j \leq \frac{p-3}{2}$.
\item $J_{\frac{p-1}{2}} \to I_0 \cup I_1 \cup ... \cup I_{\frac{p-3}{2}}$.
\end{itemize}
\end{corollary}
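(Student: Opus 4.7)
The plan is to verify each of the four covering relations directly from the intermediate value theorem, using the spatial ordering of the cycle points guaranteed by Lemma~\ref{lem:monotonicity}. First I would make the cycle convention explicit: the labeling $\{x_1,\ldots,x_p\}$ in Lemma~\ref{lem:monotonicity} is chosen so that $f(x_i)=x_{i+1}$ for $1\le i\le p-1$ and $f(x_p)=x_1$. Then, for each of the intervals $I_j$ and $J_j$, I would compute the images of its endpoints under $f$ and combine continuity with the explicit ordering
\[x_p<x_{p-2}<\cdots<x_3<x_1<x_2<x_4<\cdots<x_{p-1}\]
to identify which of the defined intervals are contained in the image.

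Carrying this out case by case: for $I_0=[x_1,x_2]$, the endpoint images are $x_2,x_3$, so $f(I_0)\supseteq[x_3,x_2]=J_1\cup I_0$. For $I_j=[x_{2j},x_{2j+2}]$ with $1\le j\le\frac{p-3}{2}$, the endpoint images are $x_{2j+1}$ and $x_{2j+3}$, and since $x_{2j+3}<x_{2j+1}$ we obtain $f(I_j)\supseteq[x_{2j+3},x_{2j+1}]=J_{j+1}$. For $J_j=[x_{2j+1},x_{2j-1}]$ with $1\le j\le\frac{p-3}{2}$, the endpoint images are $x_{2j+2}$ and $x_{2j}$ with $x_{2j}<x_{2j+2}$, hence $f(J_j)\supseteq[x_{2j},x_{2j+2}]=I_j$. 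Finally, for $J_{(p-1)/2}=[x_p,x_{p-2}]$ the endpoint images are $x_1$ and $x_{p-1}$, so $f(J_{(p-1)/2})\supseteq[x_1,x_{p-1}]$, which is exactly the union $I_0\cup I_1\cup\cdots\cup I_{(p-3)/2}$.

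There is no substantive obstacle: the content of the corollary is immediate from the combinatorics of the zig-zag ordering together with continuity, and Lemma~\ref{lem:monotonicity} has already done the dynamical work by producing a cycle whose spatial ordering is compatible with the successor relation $x_i\mapsto x_{i+1}$. The only care required is to state the labeling convention $f(x_i)=x_{i+1}$ explicitly at the outset, since without it each IVT step is meaningless; once that is recorded, every covering relation reduces to reading off two endpoint images and invoking continuity.
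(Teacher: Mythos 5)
Your proposal is correct and fills in exactly the verification the paper waves off as ``trivial'': Lemma~\ref{lem:monotonicity} is used to obtain the Stefan-type spatial ordering, and then each covering relation is read off from the two endpoint images via the intermediate value theorem, exactly as you do. Your explicit remark that the labeling must carry the dynamical convention $f(x_i)=x_{i+1}$ (with $f(x_p)=x_1$) is a genuine improvement in rigor over the paper's terse statement, since the spatial ordering alone would not determine which intervals cover which; with that convention recorded, your four computations (for $I_0$, $I_j$, $J_j$, and $J_{(p-1)/2}$) are all correct and the union $[x_1,x_{p-1}]=I_0\cup\cdots\cup I_{(p-3)/2}$ checks out.
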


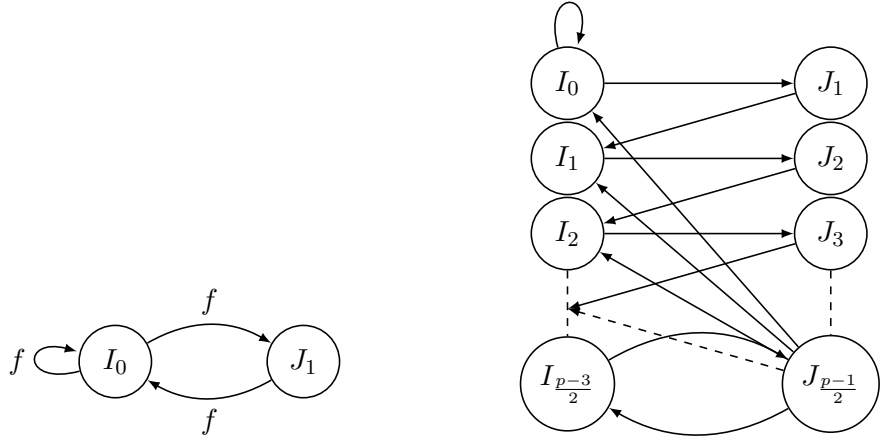
\begin{figure}
	\centering
	\subfigure[Covering relations as given by Corollary \ref{cor:covering} for a cycle of period three.]
		{
		\begin{tikzpicture}[node distance=1.5cm and 1cm,
		>=latex,auto,
		semithick,  
		initial distance=1cm,
		every initial by arrow/.style={->}
		]
		
		\node[state] (q0) {$I_0$};
		\node[state, right of=q0,  xshift=1cm] (q1) {$J_1$};
		
		\path[->] (q0) edge [bend left, above] node {\tt $f$} (q1);
		\path[->] (q1) edge [bend left, below] node {\tt $f$} (q0);
		
		\draw (q0) edge [loop left] node {\tt $f$} (q0);
		\end{tikzpicture}
		
	}
		\label{fig:cov_3}
	\hspace{2cm}
	\subfigure[Covering relations as given by Corollary \ref{cor:covering} for a cycle of an odd period $p$ greater than three. The directed dashed edge from  $J_{\frac{p-1}{2}}$ indicates that the edge goes to every node between $I_2$ and $I_{\frac{p-3}{2}}$. We omit $f$ from the arrows for ease of presentation.]
		{
		\begin{tikzpicture}[node distance=1.5cm and 1cm,
		>=latex,auto,
		semithick,  
		initial distance=1cm,
		every initial by arrow/.style={->}
		]
		
		\node[state] (q0) {$I_0$};
		\node[state, right of=q0,  xshift=2cm,yshift=0cm] (q1) {$J_1$};
		\node[state, left of=q1,  xshift=-2cm,yshift=-1cm] (q2) {$I_1$};
		\node[state, right of=q2,  xshift=2cm,yshift=0cm] (q3) {$J_2$};
		\node[state, left of=q3,  xshift=-2cm,yshift=-1cm] (q4) {$I_2$};
		\node[state, right of=q4,  xshift=2cm,yshift=0cm] (q5) {$J_3$};
		\node[state, left of=q5,  xshift=-2cm,yshift=-2cm] (q6) {$I_{\frac{p-3}{2}}$};
		\node[state, right of=q6,  xshift=2cm,yshift=0cm] (q7) {$J_{\frac{p-1}{2}}$};
		\coordinate[left of=q5, xshift=-2cm,yshift=-1cm] (d5);
		
		\path[->] (q0) edge [] node {} (q1);
		\path[->] (q1) edge [] node {} (q2);
		\path[->] (q2) edge [] node {} (q3);
		\path[->] (q3) edge [] node {} (q4);
		\path[->] (q4) edge [] node {} (q5);
		\path[-] (q5) edge [dashed] node {} (q7);
		\path[-] (q4) edge [dashed] node {} (q6);
		\path[->] (q7) edge [bend left, above] node {} (q6);
		\path[->] (q7) edge [] node {} (q4);
		\path[->] (q7) edge [] node {} (q2);
		\path[->] (q7) edge [] node {} (q0);
		\path[->] (q5) edge [] node {} (d5);
		\path[->] (q6) edge [bend left, below] node {} (q7);
		\path[->] (q7) edge [dashed] node {} (d5);

		\draw (q0) edge [loop above] node {} (q0);
		\end{tikzpicture}
	}
		\label{fig:cov_gen}
	\caption[]
	{\small The covering relations of intervals from Corollary \ref{cor:covering} are shown here. Note the existence of more directed edges when the period is odd and greater than 3, compared to the graph used to quantify the growth in \cite{ICLR}. This allows us to obtain improved bounds for $p>3$.}
	\label{fig:covering}
\end{figure}

Let $A \in \mathbb{R}^{(p-1) \times (p-1)}$ be the adjacency matrix of the covering relation graph above (the intervals denote the nodes of the graph):
\begin{align}
\begin{cases}
	A_{ji} = 1\; \text{, if}\; i=0,j=0 \\
	A_{ji} = 1\; \text{, if}\; j=i+\frac{p-1}{2} \;\text{and}\; 0\leq i \leq p-2 \\
	A_{ji} = 1\; \text{, if}\; j=i+1-\frac{p-1}{2} \;\text{and}\; \frac{p-1}{2}\leq i \leq p-2 \\
	A_{ji} = 1\; \text{, if}\; i=p-2 \;\text{and}\; 0 \leq j \leq \frac{p-3}{2} \\
    A_{ji} = 0\; \text{, otherwise}
\end{cases}
\end{align}

We define $\delta^t$ which is in $\mathbb{N}^{p-1}$ to keep track of how many times $f^t$ crosses specific intervals. The coordinate $\delta^t_i$ captures the number of times $f^t$ crosses interval $I_i$ for $i \leq \frac{p-3}{2}$ and the coordinate $\delta^t_{i+\frac{p-1}{2}}$ captures the number of times $f^t$ crosses interval $J_i$. We get that

\begin{align}
\left(\begin{array}{c}
\delta^{t+1}_0 \\
\delta^{t+1}_1 \\
\vdots \\
\delta^{t+1}_{p-1}
\end{array}\right)
\geq
A
\left(\begin{array}{c}
\delta^{t}_0 \\
\delta^{t}_1 \\
\vdots \\
\delta^{t}_{p-1}
\end{array}\right),
\end{align}
where $\delta^0= (1,\dots,1)$ (all ones vector).

\begin{claim}
The characteristic polynomial of $A^{\top}$ has the same roots as
\begin{equation}\label{eq:polynomial}
\pi_{p}(\lambda) = \lambda^{p-1} - \lambda^{p-2} - \sum_{j=0}^{p-3} (-\lambda)^j  = \frac{\lambda^p - 2\lambda^{p-2}- 1}{\lambda+1}.
\end{equation}
\end{claim}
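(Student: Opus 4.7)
The plan is to avoid computing $\det(\lambda I - A^\top)$ by brute force and instead work with $A$ itself (both matrices share a characteristic polynomial) by directly analyzing the eigenvector equation $Av = \lambda v$. The adjacency structure dictated by Corollary~\ref{cor:covering} is sparse---most intervals have a unique predecessor in the covering graph---so the eigenvector equations decouple cleanly and can be collapsed to a single scalar recurrence whose consistency condition turns out to be the desired polynomial.

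Write a candidate eigenvector as $v = (v_0,\ldots,v_{k-1},w_1,\ldots,w_k)$ with $k=(p-1)/2$, where $v_j$ is the component at $I_j$ and $w_l$ at $J_l$. Reading off the rows of $A$ from Corollary~\ref{cor:covering}, the equations $Av=\lambda v$ split as
\[
\lambda v_0 = v_0 + w_k,\quad \lambda v_j = w_j + w_k \ (1\le j\le k-1),\quad \lambda w_1 = v_0,\quad \lambda w_l = v_{l-1} \ (2\le l\le k).
\]
A direct check shows $\lambda=0$ admits only $v\equiv 0$; for $\lambda\neq 0$ the last two families give $w_l = v_{l-1}/\lambda$, and substituting back reduces the $I$-equations to the linear recurrence $\lambda^2 v_j = v_{j-1} + v_{k-1}$ for $1\le j\le k-1$, coupled with the single consistency condition $v_{k-1} = \lambda(\lambda-1)v_0$.

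Solving the recurrence in closed form (via standard geometric-series telescoping), substituting $j=k-1$, and clearing denominators by multiplying by $\lambda^{2(k-1)}(\lambda^2-1)$, reduces the consistency condition to
\[
\lambda^p - 2\lambda^{p-2} - 1 = 0, \qquad \text{i.e., } (\lambda+1)\,\pi_p(\lambda) = 0.
\]
Conversely, for any nonzero root $\lambda$ of this polynomial, setting $v_0=1$, iterating the recurrence for the $v_j$'s, and then putting $w_l = v_{l-1}/\lambda$ produces a bona fide nonzero eigenvector. It remains to rule out $\lambda=-1$: for $p$ odd the reduced recurrence collapses to $v_j = v_{j-1}+v_{k-1}$ with $v_{k-1}=2v_0$, so iteration yields $v_{k-1}=v_0+(k-1)v_{k-1}$, which forces $(3-2k)v_0=0$ and hence $v_0=0$ (since $k$ is a positive integer), collapsing $v$ to zero. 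Thus the eigenvalues of $A$ are exactly the roots of $\pi_p$, establishing the claim. The main obstacle is the algebraic telescoping step: the geometric sum $\sum_{i=1}^{k-1}\lambda^{-2i}$ must be combined with the constraint $v_{k-1}=\lambda(\lambda-1)v_0$ and simplified carefully to recognize the collapse into $\lambda^p - 2\lambda^{p-2} - 1$; exponents and signs must be tracked correctly, and the boundary cases $\lambda\in\{0,\pm 1\}$ are dispatched separately by the direct substitutions described.
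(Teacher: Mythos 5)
Your proposal is correct, and it takes a genuinely different route from the paper. The paper attacks $\det(A^{\top}-\lambda I)$ head-on: it partitions $A^{\top}-\lambda I$ into a $2\times 2$ block matrix, applies Schur's complement to reduce to a $\tfrac{p-1}{2}\times\tfrac{p-1}{2}$ determinant, and then upper-triangularizes the resulting factor by explicit row operations before reading off the product of diagonal entries. You instead bypass the determinant entirely by analyzing the eigenvector system $Av=\lambda v$ coordinate-by-coordinate: the sparsity of the covering graph (each $J_l$ has a unique in-neighbor $I_{l-1}$, and each $I_j$ with $j\ge 1$ is covered only by $J_j$ and $J_k$) lets you eliminate the $w$'s and collapse the whole system into the scalar recurrence $\lambda^2 v_j = v_{j-1}+v_{k-1}$ together with the boundary constraint $v_{k-1}=\lambda(\lambda-1)v_0$, whose consistency telescopes to $\lambda^p-2\lambda^{p-2}-1=0$. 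The two derivations agree (I verified the telescoping: with $k=\tfrac{p-1}{2}$ one gets $\lambda^{2k+1}-2\lambda^{2k-1}-1=0$, which is exactly the target). Your version is arguably cleaner structurally, replacing the block-determinant bookkeeping with a transparent geometric-series argument, at the cost of having to dispatch $\lambda\in\{0,\pm1\}$ by hand; you do this correctly, and in particular you explicitly rule out $\lambda=-1$, which is a root of the degree-$p$ polynomial but not of $\pi_p$ --- a point the paper's determinant computation handles implicitly. You also establish both directions (eigenvalue $\Leftrightarrow$ root of $\pi_p$), whereas the paper's proof is phrased only as ``eigenvalues must be roots of.'' One small thing worth noting: your equations are read off from Corollary~\ref{cor:covering}, which is the correct source; the paper's explicit entry-wise definition of $A$ (its third case, at $i=p-2$) would add a spurious edge $J_{(p-1)/2}\to J_1$ that neither the corollary, the paper's own Schur computation, nor your analysis actually uses, so that display appears to contain an off-by-one in the stated range of $i$.
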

\begin{proof} For $p=3$, the desired equation holds, since the matrix $A^{\top}$ becomes just \[A^{\top} = \left(
	\begin{array}{cc}1 & 1\\1 & 0 \end{array} \right),\] with characteristic polynomial $(\lambda-1)\lambda - 1 = \lambda^2 - \lambda -1$. Let $I$ denote the identity matrix of size $(p-1)\times (p-1)$. Assume $p \geq 5$. We consider the matrix:
	\[A^{\top} - \lambda I  = \left(
	\begin{array}{cc}
	A_{11} & A_{12}\\
	A_{21} & A_{22}\\
	\end{array}\right)
	\]
	where
	$A_{11} :=\left(\begin{array}{cccccccc}
	1-\lambda & 0 & 0 & 0 & 0 & \ldots & 0\\
	0 & -\lambda & 0 & 0 & 0 & \ldots & 0\\
	0 & 0 & -\lambda & 0 & 0 & \ldots & 0 \\
	\vdots & \vdots & \vdots & \vdots & \vdots &\vdots & \vdots\\
	0 & 0 & 0 & 0 & 0 & \ldots & -\lambda
	\end{array}\right),$ $A_{12} :=\left(\begin{array}{cccccccc}
	1 & 0 & 0 & 0 & \ldots & 0\\
	0 &1 &0 &0 &\dots &0\\
	0 &0 &1 &0 &\dots &0\\
	\vdots & \vdots & \vdots & \vdots & \vdots & \vdots\\
	0 & 0 &0 &0 &\dots &1
    \end{array}\right),$\\

	$A_{21} :=\left(\begin{array}{cccccccc}
	0 & 1 & 0 & 0 & \ldots & 0 &  0\\
	0 & 0 & 1 & 0 & \ldots & 0 &  0\\
	\vdots & \vdots & \vdots & \vdots & \vdots &\vdots & \vdots\\
	0 & 0 & 0 & 0 & \ldots & 0&  1\\
	1 & 1 & 1 & 1 & \ldots & 1&  1
	\end{array}\right),$ and $A_{22} :=\left(\begin{array}{cccccccc}
    -\lambda & 0 & 0 & \ldots & 0  & 0\\
    0 & -\lambda & 0 & \ldots & 0  & 0\\
    \vdots & \vdots & \vdots & \vdots & \vdots & \vdots\\
    0 & 0 & 0 & \ldots &  -\lambda &0\\
    0 & 0 & 0 & \ldots & 0 & -\lambda
	\end{array}\right).$
	
	Observe that $\lambda = 0$ is not an eigenvalue of the matrix $A^{\top}$. Suppose that $A_{11},A_{12},A_{21},A_{22}$ are the four block submatrices of the matrix above. Using Schur's complement, we get that
    $\textrm{det}(A^{\top} - \lambda I) = \textrm{det}(A_{22}) \times \textrm{det}(A_{11} - A_{12}A^{-1}_{22}A_{21})$, where $\textrm{det}(A_{22}) = (-\lambda)^{\frac{p-1}{2}}$ and
    \begin{equation*}
    \begin{array}{cc}
    \lambda^{\frac{p-1}{2}}\textrm{det}(A_{11} - A_{12}A^{-1}_{22}A_{21}) = \\
	\left(
	\begin{array}{cccccccccccccccc}
	\lambda-\lambda^2 & 1 & 0 & 0 & 0 & \ldots & 0 \\
	0 & -\lambda^2 & 1 & 0 & 0 & \ldots & 0\\
	0 & 0 & -\lambda^2 & 1 & 0 & \ldots & 0\\
	\vdots & \vdots & \vdots & \vdots & \vdots &\vdots & \vdots\\
	1 & 1 & 1 & 1 & \ldots &1 & -\lambda^2+1\\
	\end{array}
	\right).
	\end{array}
	\end{equation*}

    We can multiply the first row by $\tfrac{1}{\lambda(\lambda-1)}$, the second row by $\tfrac{1}{\lambda^2} + \tfrac{1}{\lambda^2 \lambda(\lambda-1)}$, the third row by $\tfrac{1}{\lambda^2}+\tfrac{1}{\lambda^4} + \tfrac{1}{\lambda^4 \lambda(\lambda-1)}$,\dots, the $i$-th row by $\sum_{j=1}^{i-1} \tfrac{1}{\lambda^{2j}}+\tfrac{1}{\lambda^{2(i-1)}\cdot  \lambda(\lambda-1)} $ (and so on) and add them to the last row. Let $B$ be the resulting matrix:
	\[B  = \left(
	\begin{array}{cccccccccccccccc}
	\lambda-\lambda^2 & 1 & 0 & 0 & 0 & \ldots & 0 \\
	0 & -\lambda^2 & 1 & 0 & 0 & \ldots & 0\\
	0 & 0 & -\lambda^2 & 1 & 0 & \ldots & 0\\
	\vdots & \vdots & \vdots & \vdots & \vdots &\vdots & \vdots\\
	0 & 0 & 0 & 0 & \ldots &0 & K\\
	\end{array}
	\right),\]
	where $K=-\lambda^2+1+\sum_{j=1}^{\frac{p-5}{2}} \tfrac{1}{\lambda^{2j}}+\tfrac{1}{\lambda^{p-5}\cdot  \lambda(\lambda-1)}.$
	It is clear that the equation $\textrm{det}(B)=0$ has the same roots as $\textrm{det}(A^{\top} - \lambda I)=0$. Since $B$ is an upper triangular matrix, it follows that
	\begin{equation*}
	    \begin{array}{cc}

	\textrm{\ \ \ \ \ det}(B)= (-1)^{\frac{p-5}{2}} \lambda(\lambda - 1) \lambda^{p-5} \cdot \left(-\lambda^2+1+\sum_{j=1}^{\frac{p-5}{2}} \tfrac{1}{\lambda^{2j}}+\tfrac{1}{\lambda^{p-5}\cdot  \lambda(\lambda-1)}\right).
		\end{array}
		\end{equation*}
	We conclude that the eigenvalues of $A^{\top}$ (and hence of $A$) must be roots of
\begin{align*}
&(\lambda^{p-3} - \lambda^{p-4})\left(1-\lambda^2 +\sum_{j=1}^{\frac{p-5}{2}} \tfrac{1}{\lambda^{2j}} \right)+1 = -\lambda^{p-1} + \lambda^{p-2} + \lambda^{p-3} - \lambda^{p-4} + \sum_{j=1}^{\frac{p-5}{2}} \lambda^{p-3-2j} - \lambda^{p-4-2j}+1 \\=& -\lambda^{p-1} + \lambda^{p-2} + \sum_{j=0}^{p-3} (-1)^j \lambda^{j} = \frac{-\lambda^{p}+\lambda^{p-2}}{\lambda+1} + \frac{1+\lambda^{p-2}}{\lambda+1} = \frac{-\lambda^p +2\lambda^{p-2}+1}{\lambda+1},
\end{align*}
and the claim follows.
\end{proof}
The following corollary establishes a connection between the growth rate of the oscillations of compositions of function $f$ with its prime period. Also, we establish a universal sharp threshold phenomenon demonstrating that the width needs to grow at a rate at least as large as $\sqrt{2}$, as long as the function contains an odd period (this is in contrast with previous depth separation results where the growth rate converges to one as the period $p$ goes to infinity).

\begin{corollary} Let $f:[a,b] \to [a,b]$ be a continuous function with prime odd period $p>1$. There exist $x,y$ such that the number of oscillations between $x,y$ of $f^t$ is $\Theta(\rho_p^t)$ where $\rho_{p}$ is the positive root greater than one of $q_p(\lambda):=\lambda^{p}-2\lambda^{p-2} -1 = 0$. Moreover, $\rho_p$ is decreasing in $p$ and $\rho_p > \sqrt{2}$, for all $p$.
\end{corollary}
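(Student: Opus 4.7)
The plan is to decompose the statement into three parts: the $\Theta(\rho_p^t)$ growth of oscillations, the lower bound $\rho_p > \sqrt{2}$, and the monotonicity $\rho_{p+2} < \rho_p$ for odd $p$. The first is the substantive step and follows from a Perron--Frobenius analysis of the covering matrix $A$ already assembled in the excerpt; the other two reduce to elementary manipulations of the polynomial $q_p(\lambda) = \lambda^{p-2}(\lambda^2-2) - 1$.

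For the growth rate, iterate the coordinatewise inequality $\delta^{t+1} \geq A \delta^t$ starting from $\delta^0 = (1,\ldots,1)$ to obtain $\delta^t \geq A^t \mathbf{1}$ entrywise. The covering graph on the intervals $\{I_j\} \cup \{J_j\}$ is strongly connected via the cycle $I_0 \to J_1 \to I_1 \to J_2 \to \cdots \to I_{(p-3)/2} \to J_{(p-1)/2} \to I_0$, and it contains the self-loop $I_0 \to I_0$, so $A$ is a nonnegative, irreducible, and primitive matrix. By Perron--Frobenius, its spectral radius $\rho$ is a simple positive real eigenvalue with strictly positive right eigenvector, and $A^t/\rho^t$ converges to a strictly positive rank-one matrix. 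By the preceding claim, $\rho$ must be a positive real root of $q_p$. But $q_p(\lambda) = \lambda^{p-2}(\lambda^2-2) - 1 < 0$ on $[0,\sqrt{2}]$ (because $\lambda^2 - 2 \leq 0$ there), so the only positive real root of $q_p$ lies in $(\sqrt{2},\infty)$, namely $\rho_p$; hence $\rho = \rho_p$. Taking $x,y$ to be the endpoints of any fixed interval, the corresponding coordinate of $A^t \mathbf{1}$ is $\Theta(\rho_p^t)$, giving the desired lower bound on $\textrm{C}_{x,y}(f^t)$.

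For $\rho_p > \sqrt{2}$: compute $q_p(\sqrt{2}) = (\sqrt{2})^{p-2}\cdot 0 - 1 = -1 < 0$, and note that $q_p'(\lambda) = \lambda^{p-3}\bigl(p\lambda^2 - 2(p-2)\bigr) > 0$ whenever $\lambda > \sqrt{2-4/p}$, so $q_p$ is strictly increasing on $[\sqrt{2},\infty)$, and the intermediate value theorem yields a unique root $\rho_p > \sqrt{2}$. For monotonicity, rearrange $q_p(\rho_p) = 0$ as $\rho_p^{p-2}(\rho_p^2 - 2) = 1$ and compute
\[ q_{p+2}(\rho_p) = \rho_p^p(\rho_p^2 - 2) - 1 = \rho_p^2 \cdot \rho_p^{p-2}(\rho_p^2 - 2) - 1 = \rho_p^2 - 1 > 0. \]
Combined with $q_{p+2}(\sqrt{2}) = -1 < 0$ and the same strict monotonicity of $q_{p+2}$ on $[\sqrt{2},\infty)$, the intermediate value theorem places $\rho_{p+2}$ strictly between $\sqrt{2}$ and $\rho_p$, giving $\rho_{p+2} < \rho_p$. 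The main obstacle I expect is pinning down the Perron eigenvalue of $A$ to the specific polynomial root $\rho_p$; the observation that $q_p < 0$ on $[0,\sqrt{2}]$ is precisely what rules out any competing positive real eigenvalue, which is why the $\sqrt{2}$ threshold appears naturally in all three parts of the proof.
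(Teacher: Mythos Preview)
Your proof is correct and follows essentially the same route as the paper. The only cosmetic differences are that you invoke Perron--Frobenius explicitly (verifying irreducibility and primitivity) to pin the spectral radius of $A$ to the unique positive root of $q_p$, whereas the paper cites prior work and uses $\delta_0^t \ge \|A^t\|_\infty \ge \mathrm{spec}(A)^t$; and for monotonicity you evaluate $q_{p+2}$ at $\rho_p$ (getting $\rho_p^2-1>0$) while the paper evaluates $q_p$ at $\rho_{p+2}$ (getting a negative value), but both are the same identity $q_{p+2}(\lambda)=\lambda^2 q_p(\lambda)+\lambda^2-1$ read in opposite directions.
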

\begin{proof}
We first need to relate the spectral radius with the number of oscillations. We follow the idea from \cite{ICLR} which concludes that $\delta^t_0 \geq \norm{A^t}_{\infty} \geq \textrm{spec}(A^t) = \textrm{spec}(A)^t = \rho_p^t$ (where $\textrm{spec}(A)$ denotes the spectral radius), that is the growth rate of the number of oscillations of compositions of $f$ is at least $\rho_p$.

Assume $1< p$ be an odd number. It suffices to show that $\rho_{p+2} < \rho_p$ (and then use induction). Observe that
$\lambda^{p+2} - 2\lambda^{p} - 1 = \lambda^2 (\lambda^p - 2\lambda^{p-2} - 1) + \lambda^2 -1$. Therefore
\[
 0 = q_{p+2}(\rho_{p+2}) = \rho_{p+2}^2 q_p (\rho_{p+2}) + \rho_{p+2}^2-1,
\]
hence since $\rho_{p+2}>1$ we conclude that $q_p (\rho_{p+2}) <0$. Since $\lim_{\lambda \to \infty} q_p (\lambda) = +\infty$, by Bolzano's theorem it follows that $q_p$ has a root in the interval $(\rho_{p+2},+\infty)$. Thus $\rho_p > \rho_{p+2}$. One can also see that $\sqrt{2}^p - 2\sqrt{2}^{p-2} - 1 = -1<0$ and $2^p - 2\cdot 2^{p-2}-1>0$, thus from Bolzano's again, it follows that $\rho_p>\sqrt{2}$ for all $p$.
\end{proof}

\begin{remark} We note that $\rho_p>\sqrt{2}$ (the growth rate is at least $\sqrt{2}$) whereas the growth rate in \cite{ICLR} was converging to one as $p\to \infty$.
\end{remark}

We now provide tight constructions for a family of functions $f$ that have points of period $p$ (thus the number of oscillations of $t$ compositions of $f$ scales as $\rho_p^t$, i.e., the growth rate is $\rho_p$) and moreover the Lipschitz constant is $\rho_p$. By \hyperref[lem:match]{Theorem}~\ref{lem:match}, this family cannot be approximated by shallow neural networks in $L^1$ sense.
\begin{lemma}\label{lem:example}
Let $p>1$ be an odd number and $\rho_p$ be the largest positive root greater than one of the polynomial $\lambda^{p}-2\lambda^{p-2} - 1 = 0$. The function $f:[-1,1] \to [-1,1]$, defined to be $f(x):=\rho_p |x| - 1$ has Lipschitz constant $\rho_p$ and has period $p$.
\end{lemma}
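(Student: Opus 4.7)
The function $f(x)=\rho_p|x|-1$ is a symmetric V-shaped ``tent'' whose Lipschitz constant and range can be checked by inspection; the substantive content is to exhibit a point of exact period $p$. My plan is to show that the critical point $c:=0$ itself has exact period $p$ under $f$.

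\textbf{Bookkeeping.} First I would check that $f$ maps $[-1,1]$ into itself with Lipschitz constant exactly $\rho_p$. From $|f(x)-f(y)|=\rho_p\bigl||x|-|y|\bigr|\le\rho_p|x-y|$, with equality on same-sign pairs, the Lipschitz constant is $\rho_p$. The range is $[-1,\rho_p-1]$, and the containment $\rho_p-1\le 1$ follows from $q_p(2)=2^{p-1}-1>0$, which together with $q_p(1)=-2<0$ forces $\rho_p\in(1,2)$.

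\textbf{Setting up the critical orbit.} Let $x_n:=f^n(0)$. Direct computation gives $x_1=-1$, $x_2=\rho_p-1>0$, and $x_3=\rho_p^2-\rho_p-1$. Since $\rho_p\le\rho_3=\phi$ (the golden ratio, the root of $\lambda^2-\lambda-1$), we have $x_3\le 0$, with equality iff $p=3$. For $p\ge 5$ I would claim (and verify a posteriori) that $x_n\in(-1,0)$ for every $3\le n\le p-1$. Granted this, $f$ acts on each such $x_n$ via its left branch $f(x)=-\rho_p x-1$, so
\[
x_{n+1} \;=\; -\rho_p\,x_n - 1, \qquad 3\le n\le p-1.
\]

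\textbf{Solving the recurrence and matching the polynomial.} The general solution of this affine recurrence is $x_n=A(-\rho_p)^n-\tfrac{1}{\rho_p+1}$ for a constant $A$ pinned down by $x_3$; a short simplification gives $A=\tfrac{2-\rho_p^2}{\rho_p^2(\rho_p+1)}$. Evaluating at $n=p$ (odd) yields
\[
x_p \;=\; -A\rho_p^p-\frac{1}{\rho_p+1} \;=\; \frac{\rho_p^p-2\rho_p^{p-2}-1}{\rho_p+1},
\]
which vanishes precisely because $\rho_p$ is a root of $\lambda^p-2\lambda^{p-2}-1$. Equivalently $A=-\tfrac{1}{\rho_p^p(\rho_p+1)}$, giving the compact closed form
\[
x_n \;=\; -\frac{1}{\rho_p+1}\left(1+\frac{(-1)^n}{\rho_p^{\,p-n}}\right), \qquad 3\le n\le p.
\]
Since $\rho_p>1$, the fraction $(-1)^n/\rho_p^{\,p-n}$ lies in $(-1,1)$ whenever $3\le n\le p-1$, so each such $x_n$ lies in $(-\tfrac{2}{\rho_p+1},0)\subset(-1,0)$. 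This retroactively verifies the inductive sign assumption used to linearize the recurrence.

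\textbf{Exact period and main obstacle.} Since $x_1=-1$, $x_2>0$, and $x_n<0$ for $3\le n\le p-1$, none of $x_1,\dots,x_{p-1}$ equals $0$; hence $0$ has exact period $p$. The real obstacle is identifying the correct itinerary of signs $(-,+,-,-,\dots,-)$ of the critical orbit under $f$; once that ansatz is in hand, the dynamics linearize on the left branch, the defining polynomial $\lambda^p-2\lambda^{p-2}-1$ emerges naturally as the return-to-zero condition at time $p$, and the closed-form solution simultaneously certifies that the orbit never exits $(-1,0)$ between times $3$ and $p-1$.
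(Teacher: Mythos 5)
Your proof is correct, and it follows the same basic route as the paper (track the orbit of the critical point $0$, observe that after time $2$ the dynamics are governed by the left affine branch $x\mapsto -\rho_p x - 1$, and verify return to $0$ at time $p$). The difference is in the bookkeeping, and it is a genuine improvement in clarity. The paper writes $z_t$ in terms of the polynomials $q_t(\rho_p)=\tfrac{\rho_p^t-2\rho_p^{t-2}-1}{\rho_p+1}$, argues sign control via monotonicity of $\rho_p$ in $p$ together with $q_t(\rho_t)=0$, and then separately proves the full pairwise distinctness of $z_0,\dots,z_{p-1}$ via monotonicity of $q_t$ in $t$. Your closed form $x_n=-\tfrac{1}{\rho_p+1}\bigl(1+\tfrac{(-1)^n}{\rho_p^{p-n}}\bigr)$ makes both the sign pattern ($x_n\in(-1,0)$ for $3\le n\le p-1$) and the return $x_p=0$ immediate, and you correctly observe that for the definition of period used in the paper one only needs $x_n\neq 0$ for $1\le n\le p-1$, not pairwise distinctness; that follows at once from $x_1=-1$, $x_2>0$, $x_n<0$. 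So your version is slightly more economical.

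Two minor expository cautions. First, the step you call ``retroactive verification'' should really be presented as a (strong) induction on $n$: assuming $x_3,\dots,x_n\in(-1,0)$ for some $n\le p-2$, the recurrence is valid up to index $n$, so the closed form (uniquely determined by $x_3$) holds through $x_{n+1}$, and evaluating it shows $x_{n+1}\in(-1,0)$; stated as ``solve, then check'' it reads as potentially circular even though the content is sound. Second, your justification that $\rho_p\in(1,2)$ from $q_p(1)<0<q_p(2)$ only produces \emph{some} root in $(1,2)$; to conclude that the \emph{largest} root exceeding $1$ is below $2$ you also need $q_p(\lambda)=\lambda^{p-2}(\lambda^2-2)-1>0$ for all $\lambda\ge 2$, which is immediate but should be said. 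Finally, note that both your base case ($x_3<0$ via $\rho_p\le\phi$) and the paper's rely on the monotonicity $\rho_{p+2}<\rho_p$ established in the corollary immediately preceding this lemma, so this use of an earlier result is consistent with the paper's own structure.
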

\begin{proof}
It suffices to show that $f$ has period $p$ (the Lipschitz constant is trivially $\rho_p$). We start from $z_0=0$ and we get $z_t = f(z_{t-1}) = \rho_p|z_{t-1}|-1$ for $1\leq  t \leq p$. Observe that $z_1 = -1$, $z_2 = \rho_p-1>0$. Set $q_i(\lambda) = \frac{\lambda^{i}-2\lambda^{i-2} -1}{\lambda+1}$. First, we shall show that for $t \in \{3,\dots, p-1\}$, we have $z_t \leq 0$ and that $z_{t} = q_{t}(\rho_p)$, whereas for $t$ even, we have $z_{t} = -q_{t-1}(\rho_p) \rho_p -1$ in the interval above.

For $t=3$ we get that $z_3 = \rho_p^2 - \rho_p -1 = q_3(\rho_p) \leq 0$ because we showed $\rho_p$ is decreasing in $p$ and moreover holds $q_3(\rho_3)=0$. Since $z_3 \leq 0$ we get that $z_4 = - \rho_p z_3-1 = q_{3}(\rho_p) \rho_p -1$. Let us show that $z_4 \leq 0$. Observe that $z_4 = -\rho_p^3 +\rho_p^2 + \rho_p -1 = (\rho_p-1)(1-\rho_p^2) < 0$ (since $\rho_p>\sqrt{2}$).

We will use induction. Assume now, that we have the result for some $t$ even, we need to show that $z_{t+1} = q_{t+1}(\rho_p), z_{t+2} = - q_{t+1}(\rho_p) \rho_p-1$ and moreover $z_{t+1},z_{t+2}\leq 0$.

By induction, we have that $z_{t-1},z_{t} \leq 0$ and $z_{t} = -q_{t-1}(\rho_p) \rho_p - 1$, hence $z_{t+1} = -\rho_p (-q_{t-1}(\rho_p) \rho_p - 1) -1 = \frac{\rho_p^{t+1} - 2\rho_p^t - \rho_p^2}{\rho_p+1}+\rho_p -1 = q_{t+1}(\rho_p)$. Since $\rho_p$ is decreasing in $p$ and $q_{t+1}(\rho_{t+1})=0$, we conclude that $z_{t+1}\leq 0$. Since $z_{t+1}\leq 0$, we get that $z_{t+2} = -\rho_p z_{t+1} - 1 = -\rho_p q_{t+1}(\rho_p) - 1$. To finish the claim, it suffices to show that $z_{t+2} \leq 0$. Observe that
\begin{align*}
-\rho_p q_{t+1}(\rho_p) - 1 &= -\rho_p \left(\rho_p^{t} - \rho_p^{t-1} - \sum_{j=0}^{t-2} (-\rho_p)^j\right) - 1\\
& = -\rho_p^{t+1} + \rho_p^{t} - \sum_{j=1}^{t-1} (-\rho_p)^{j} - 1\\& = -2\rho_p^{t+1} + 2\rho_p^t + \frac{q_{t+1}(\rho_p)}{\rho_p+1}.
\end{align*}
The term $-2(\rho_p^{t+1}-\rho_p^t) <0$ (since $\rho_p>1$) and moreover $\frac{q_{t+1}(\rho_p)}{\rho_p+1}\leq 0$ because $\rho_p$ is decreasing in $p$ and $t+1 \leq p-1$. Hence $z_{t+2}\leq 0$ and the induction is complete.

From the above, we conclude that $z_p = -\rho_p z_{p-1} -1 =  q_p (\rho_p) = 0$, thus $z_0,...,z_{p-1}$ form a cycle. If we show that $z_0,...,z_{p-1}$ are distinct, the proof of the lemma follows.

First observe that $q_t(\lambda) = \frac{\lambda^{t}-2\lambda^{t-2}-1}{\lambda+1}$ is strictly increasing in $t$ as long as $\lambda>\sqrt{2}$ (by computing the derivative). Therefore it holds that $z_3 <  z_5<\ldots < z_p =0$ (for all the odd indices) and also $z_1<z_3$. Furthermore, $-\lambda q_t(\lambda) -1$ is decreasing in $t$ for $\lambda>\sqrt{2}$, therefore we conclude $z_4>\ldots>z_{p-1}$ (and also $z_2>0 \geq z_4$).

We will show that $z_3 >z_4$ and finally $z_{p-1} > -1 =z_1$ and the lemma will follow. Recall $z_3 = \rho_p^2 - \rho_p-1$ and $z_4 = -\rho_p^3 + \rho_p^2+\rho_p-1$. Equivalently, we need to show that
$\rho_p^2 - \rho_p-1 > -\rho_p^3 + \rho_p^2+\rho_p-1$ or
$\rho_p^3 -2\rho_p>0$ which holds because $\rho_p >\sqrt{2}$.
Finally $z_{p-1} = -\rho_p z_{p-2} - 1 >-1$ since $z_{p-2}<z_p=0$.

\end{proof}

\subsection{Sensitivity to Lipschitzness and separation examples based on periods}

We end this section with some simple examples that illustrate the behavior of the aforementioned family of functions $f(x):=\rho_p|x| - 1$ for different parameters and the corresponding depth-width trade-offs that can be derived. As a consequence, we will observe how similar-looking functions can actually have vastly different behaviors with regards to oscillations, periods and hence depth separations (see \hyperref[fig:rho_1]{Figure~\ref{fig:rho_1}}).

We consider three regimes. The first regime corresponds to the functions that appear in \hyperref[lem:match]{Lemma \ref{lem:match}}, where $L=\rho_{p}$ and $\rho_p \in [\sqrt{2},\phi]$, where $\phi = \tfrac{1+\sqrt{5}}{2} \approx 1.618$ is the golden ratio. The second regime corresponds to the case when $L > \phi$ and the third regime corresponds to the case when $L < \sqrt{2}$. We can see in \hyperref[fig:rho_2]{Figure~\ref{fig:rho_2}} that the function $f(x):=2|x| - 1$ has period 3 and a Lipschitz constant of $L=2$, while in \hyperref[fig:rho_3]{Figure~\ref{fig:rho_3}}, we can see that the function $f(x):=1.2|x| - 1$, does not have any odd period and $L=1.2$.

\begin{figure}
	\centering
	\subfigure[The regime $ \sqrt2 \le L\le \phi $ with small variations of the slope. Intersection with $y=x$ identifies fixed points.]{\includegraphics[width = 2.8in]{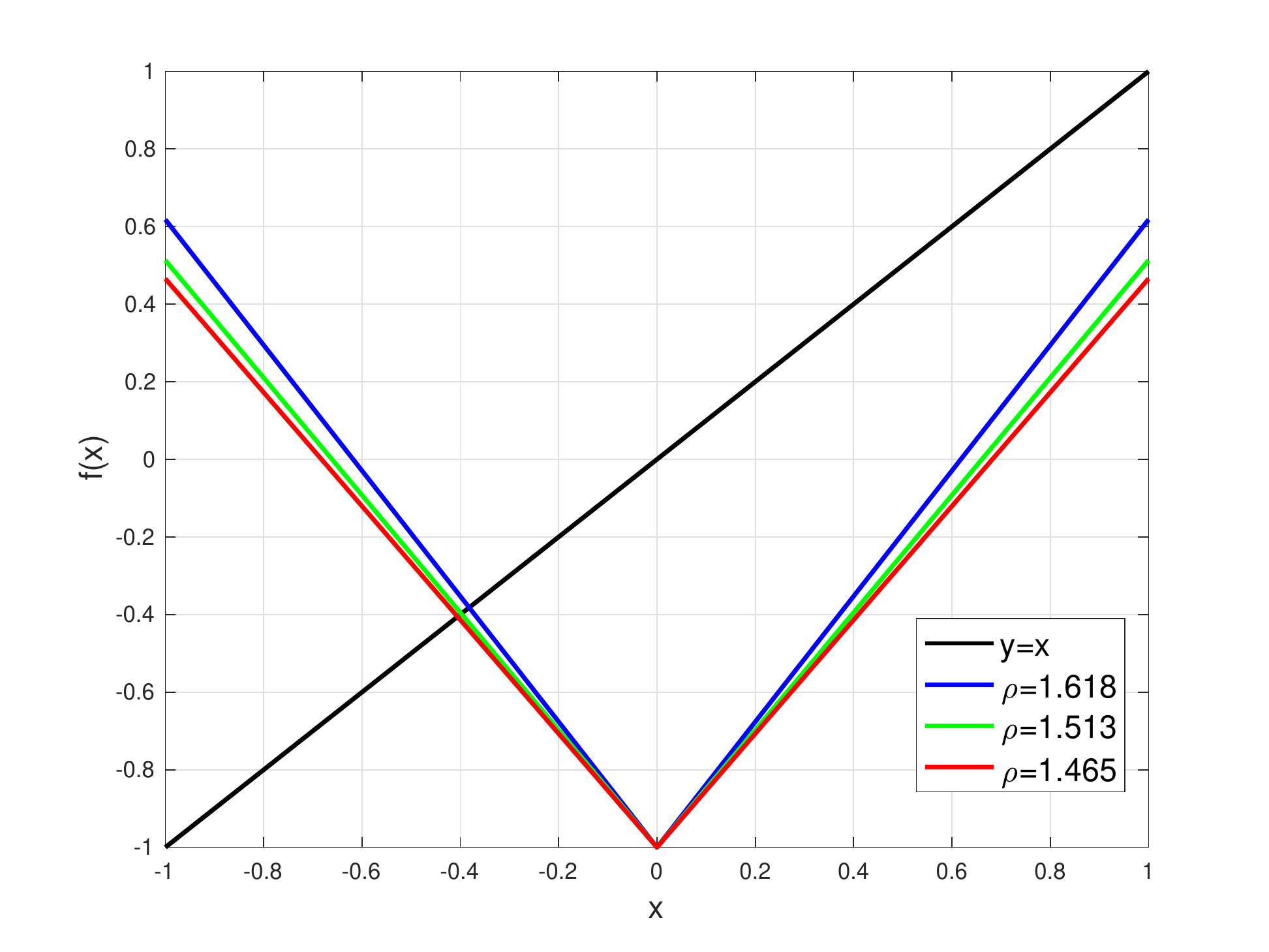}}
	\hspace{2cm}
	\subfigure[Graph of $f^3(x)$ intersected with $y=x$, to identify period 3 points. Only when $L= \phi$, period 3 is present, hence it gives exponential depth-width trade-offs with base $\phi$.]{\includegraphics[width = 2.8in]{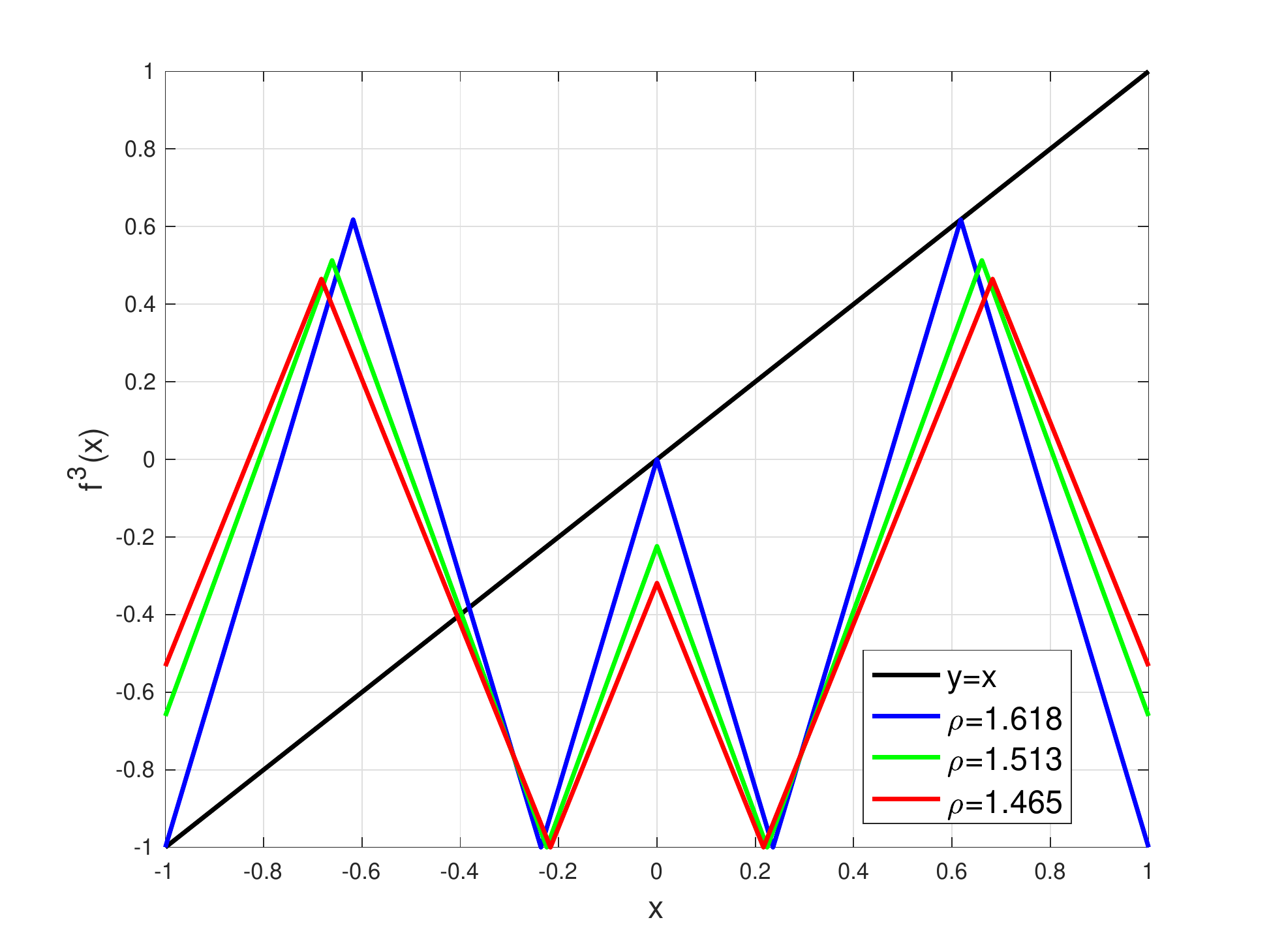}}\\
	\subfigure[Graph of $f^5(x)$ intersected with $y=x$, to identify period 5 points. When $L= 1.513$, period 5 is present, hence it gives trade-offs with base 1.513.]{\includegraphics[width = 2.8in]{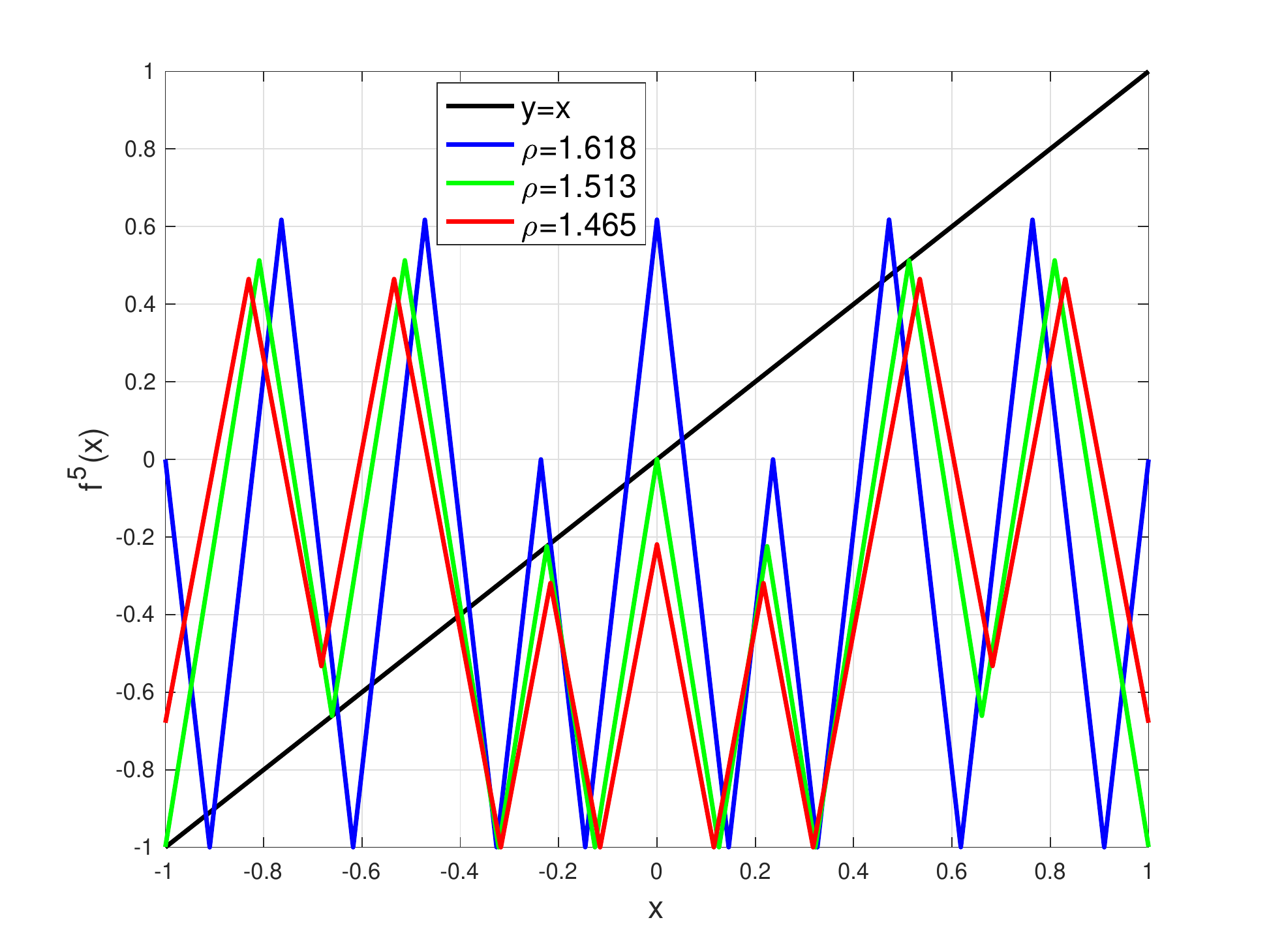}}
	\hspace{2cm}
	\subfigure[Graph of $f^7(x)$ intersected with $y=x$, to identify period 7 points. When $L= 1.465$, period 7 is present, hence it gives trade-offs with base 1.465.]{\includegraphics[width = 2.8in]{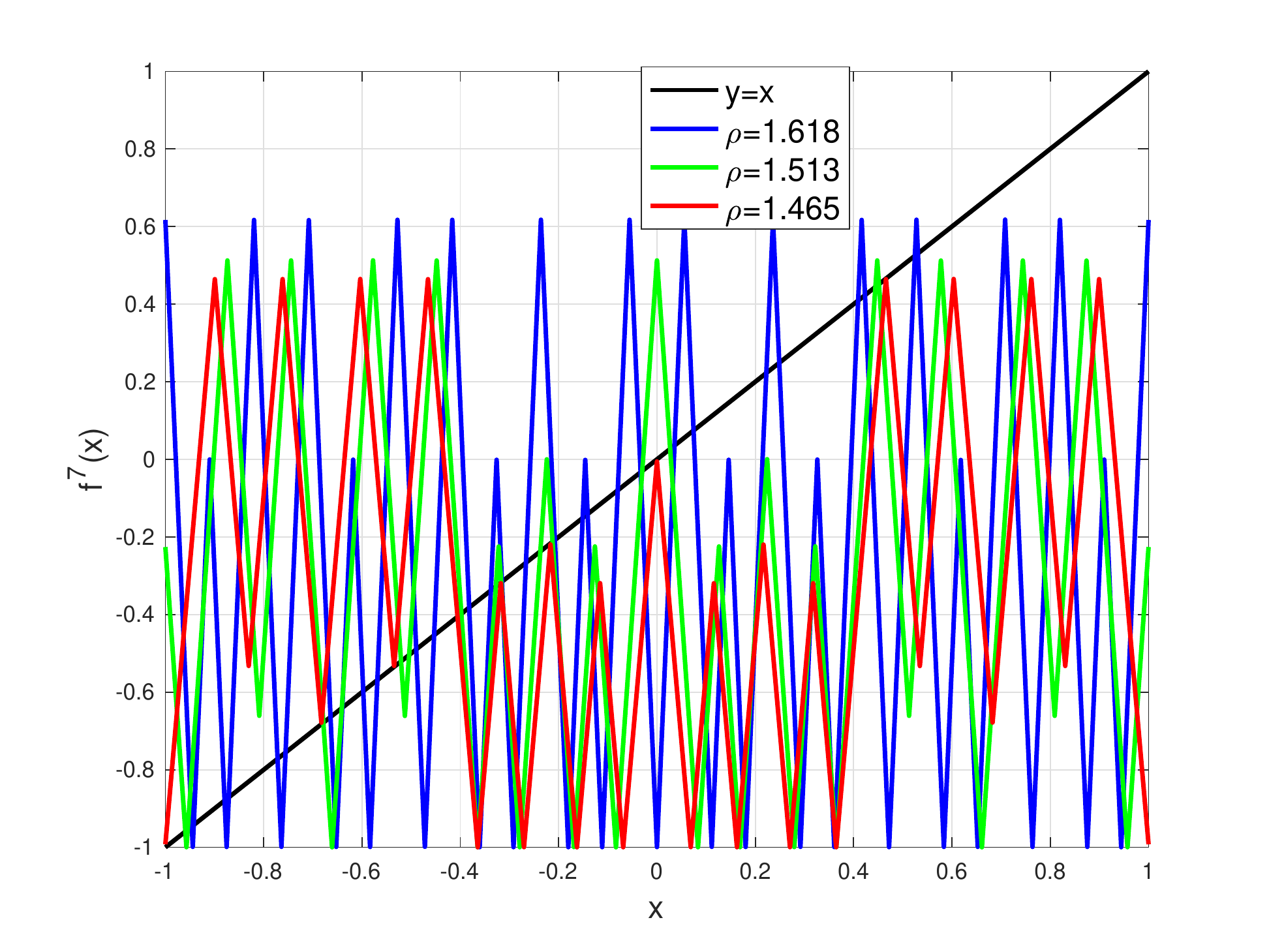}}
	\caption[]
	{\small The family of functions parameterized by $\rho_p$, where $L=\rho_p$ and $\rho=1.618,1.513, 1.465$ correspond to period 3, 5 and 7 respectively. Observe how slight perturbations of the function can lead to different trade-offs.}
	\label{fig:rho_1}
\end{figure}

\begin{figure}
	\centering
	\subfigure[Graph of $f(x)$ intersected with $y=x$, to identify period 1 points.]{\includegraphics[width = 2.8in]{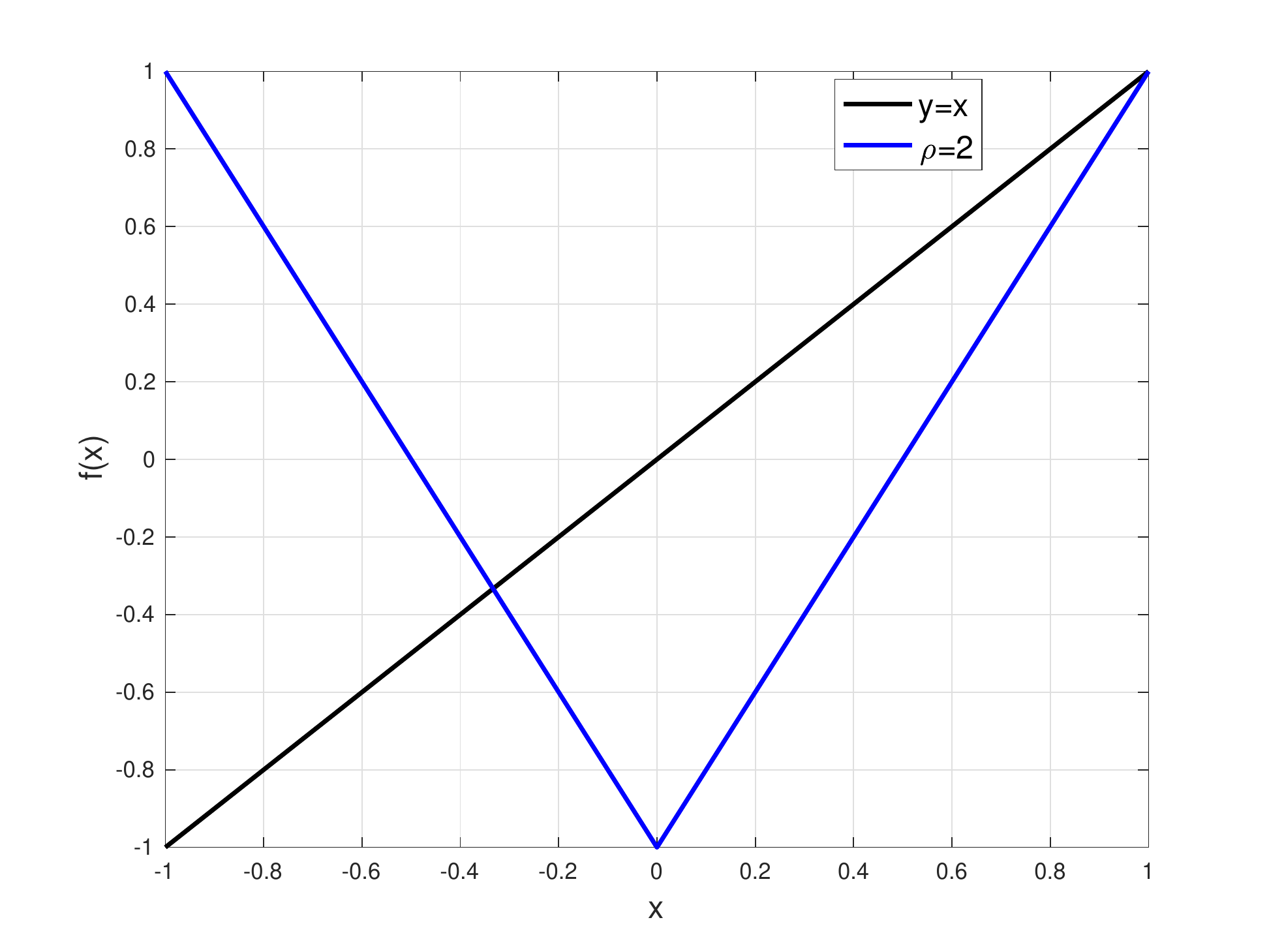}}
	\hspace{2cm}
	\subfigure[Graph of $f^3(x)$ intersected with $y=x$, to identify period 3 points.]{\includegraphics[width = 2.8in]{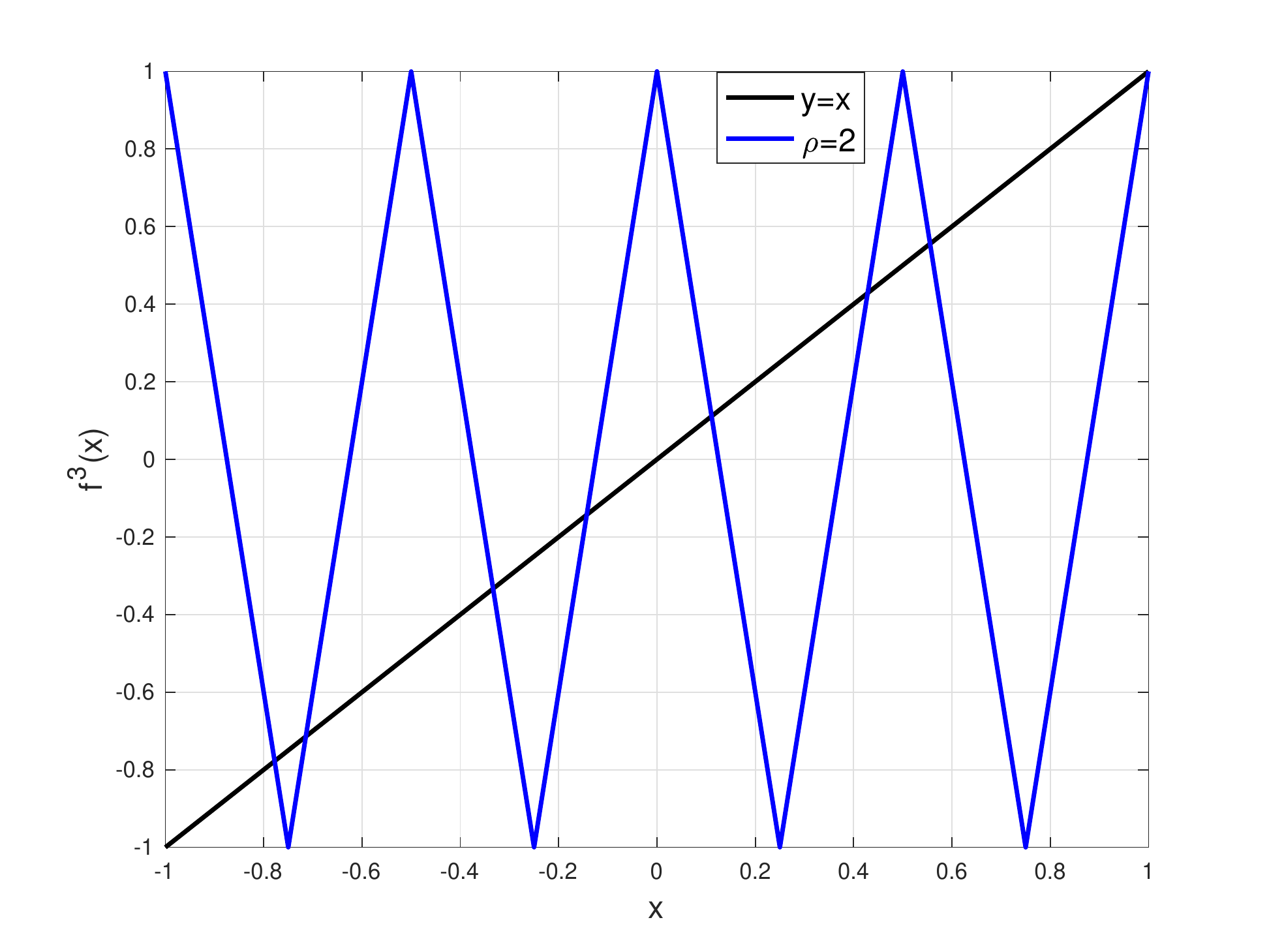}}\\
	\subfigure[Graph of $f^5(x)$ intersected with $y=x$, to identify period 5 points.]{\includegraphics[width = 2.8in]{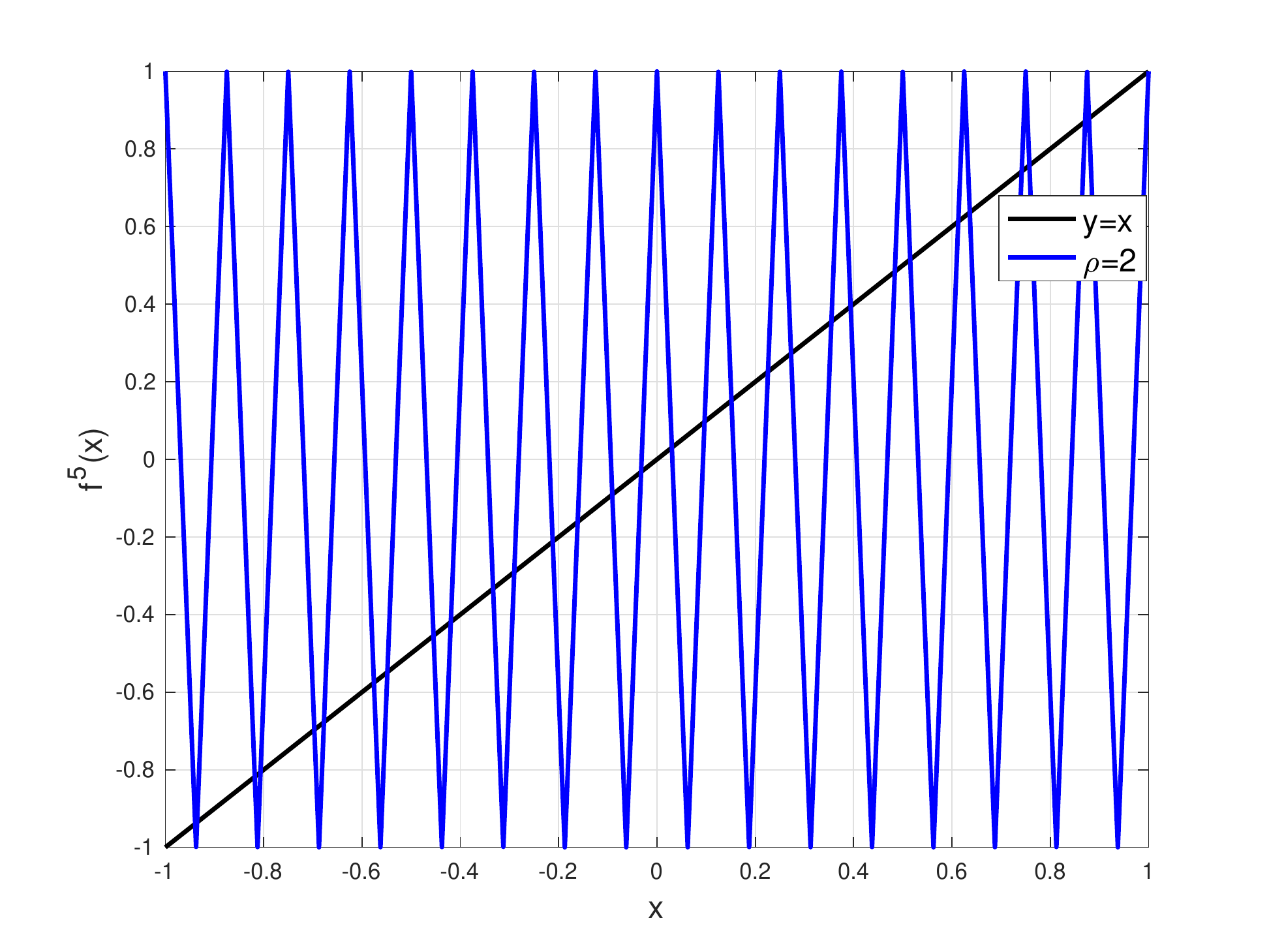}}
	\hspace{2cm}
	\subfigure[Graph of $f^7(x)$ intersected with $y=x$, to identify period 7 points.]{\includegraphics[width = 2.8in]{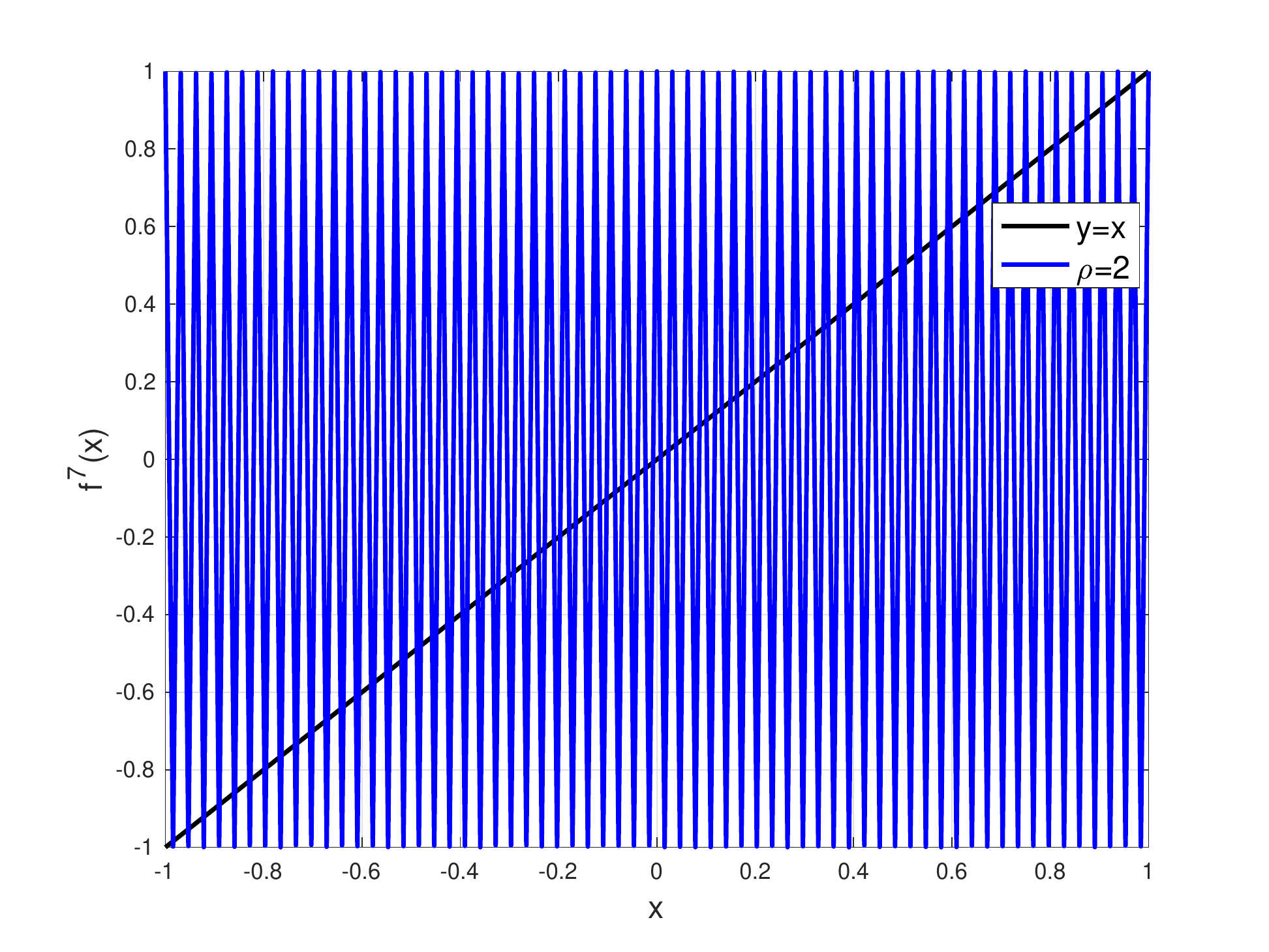}}
	\caption[]
	{\small Here $L=2$, and this function has period 3. However, the growth rate of oscillations is exactly 2 and since we have equality $L=\rho$ we get $L^1$ separations even though the largest root $\rho_3=\phi<2$.}
	\label{fig:rho_2}
\end{figure}

\begin{figure}
	\centering
	\subfigure[Graph of $f(x)$ intersected with $y=x$, to identify period 1 points.]{\includegraphics[width = 2.8in]{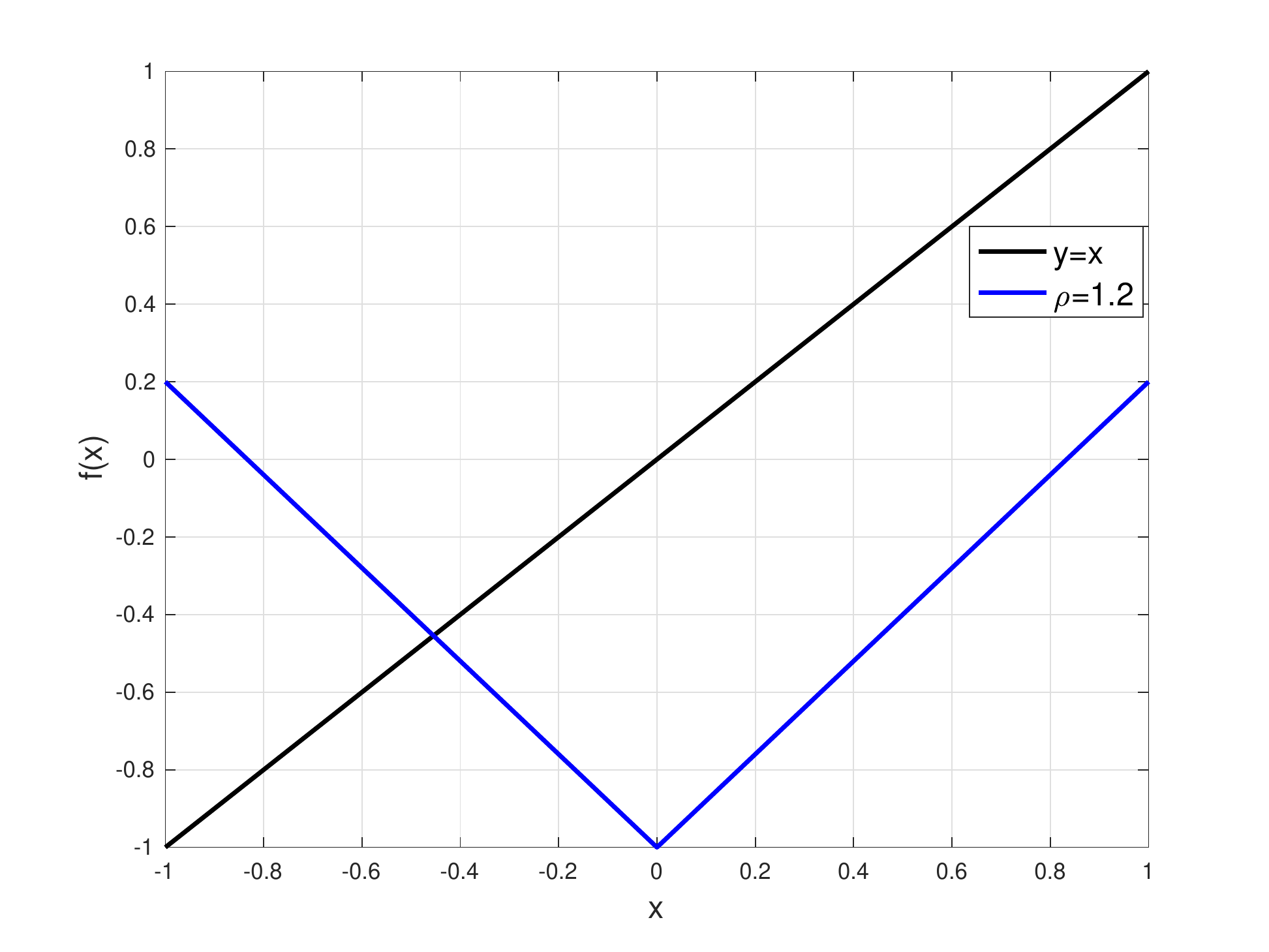}}
	\hspace{2cm}
	\subfigure[Graph of $f^3(x)$ intersected with $y=x$, to identify period 3 points.]{\includegraphics[width=2.8in]{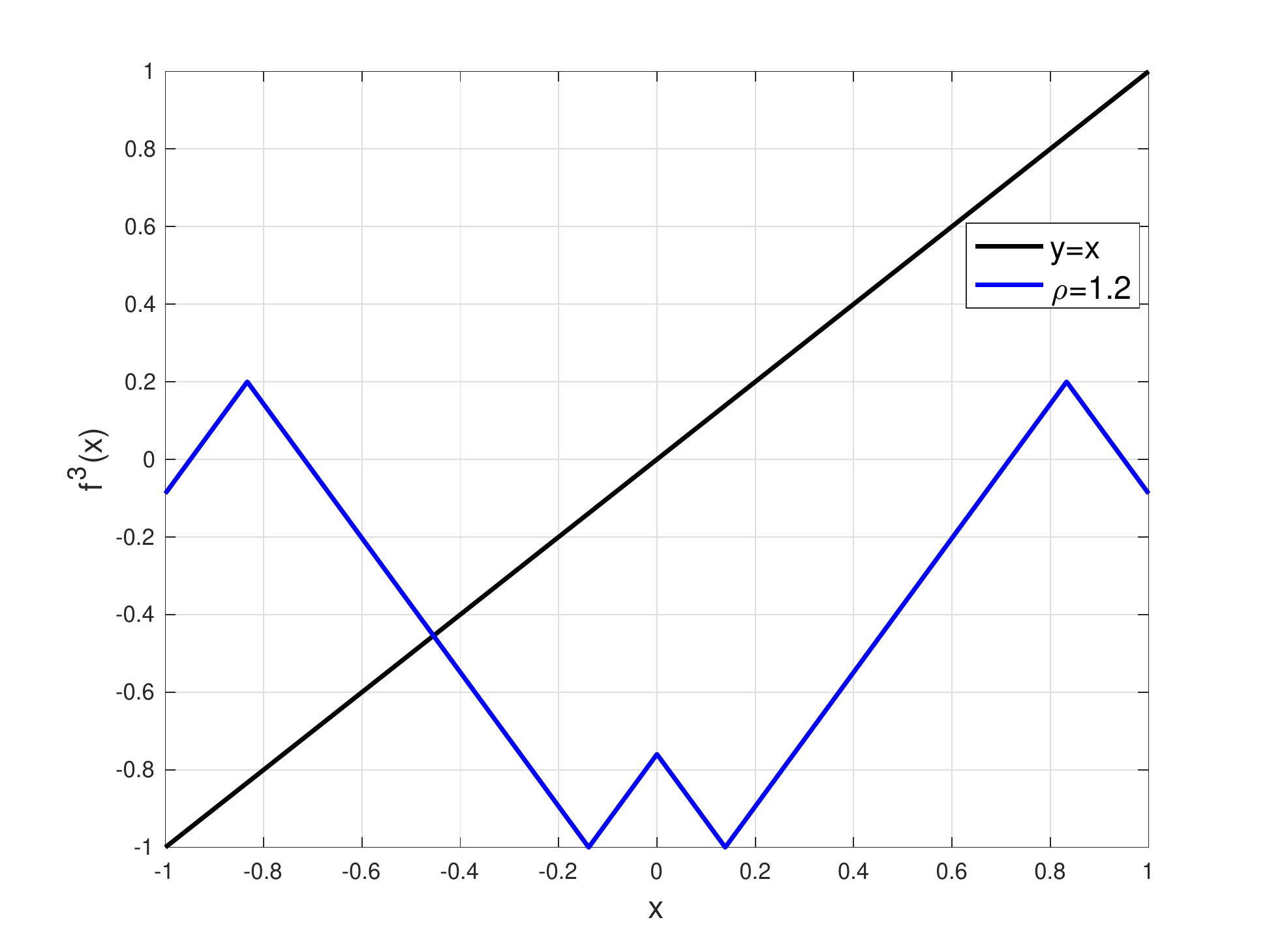}}\\
	\subfigure[Graph of $f^5(x)$ intersected with $y=x$, to identify period 5 points.]{\includegraphics[width = 2.8in]{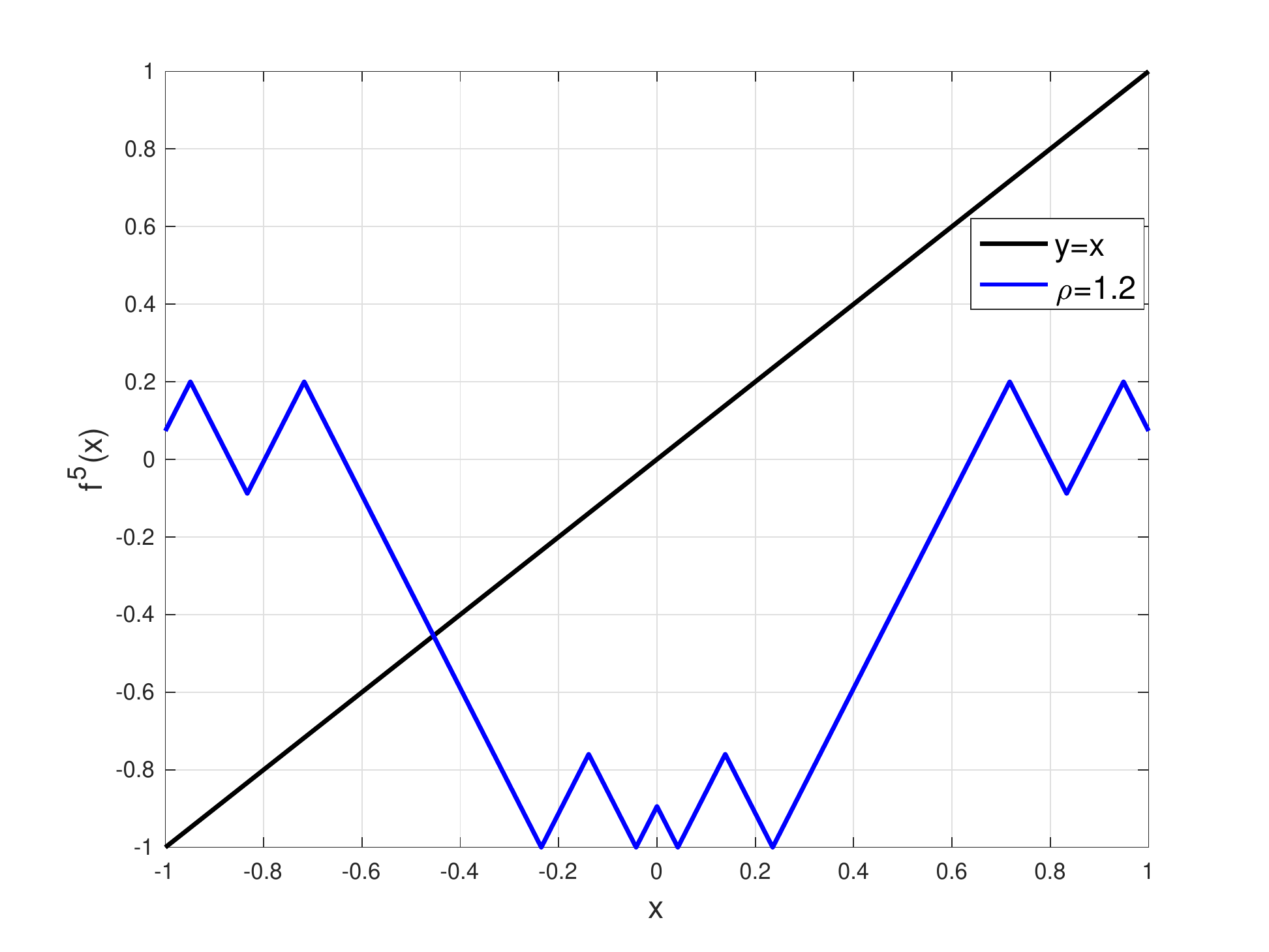}}
	\hspace{2cm}
	\subfigure[Graph of $f^7(x)$ intersected with $y=x$, to identify period 7 points.]{\includegraphics[width = 2.8in]{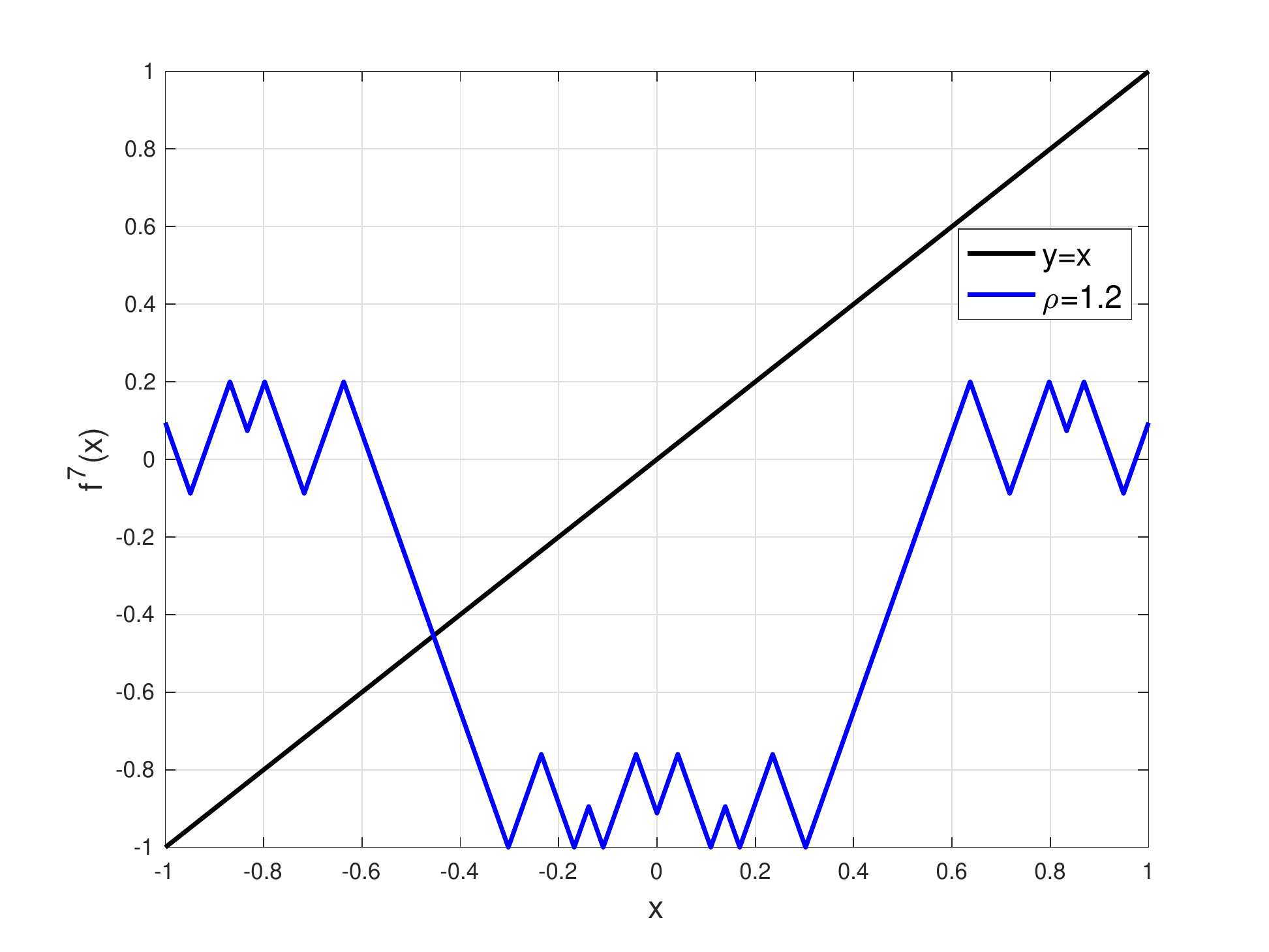}}
	\caption[]
	{\small Here $L=1.2$ that corresponds to the regime where $L<\sqrt{2}$. It follows that this function cannot have any odd period (because then $L\ge \rho\ge \sqrt{2}$). Observe that the oscillations do not grow exponentially fast and they shrink in area, hence no $L^1$ separation is achievable.}
	\label{fig:rho_3}
\end{figure}

\hyperref[fig:rho_2]{Figure~\ref{fig:rho_2}} and \hyperref[fig:rho_3]{Figure~\ref{fig:rho_3}} correspond to cases where the Lipschitz constant of the function does not match $\rho_{p}$.

\begin{itemize}	
\item When $ \sqrt2 \le L\le \phi $, we see from \hyperref[fig:rho_1]{Figure \ref{fig:rho_1}}, how small differences in the values of the slope can lead to the existence of different (prime) periods, which consequently lead to different depth-width trade-offs.
\item When $L > \phi$, we can see from \hyperref[fig:rho_2]{Figure \ref{fig:rho_2}} that $L=2$ and also the growth rate of oscillations is 2. This means that $L=\rho$ and that $L^1$ separation is achievable. Note that period 3 is present in the tent map, so $\rho_{3}=\phi$ for this case.
\item When $L < \sqrt{2}$, we can see from \hyperref[fig:rho_3]{Figure \ref{fig:rho_3}} that the oscillations do not grow exponentially with compositions and that the existing ones are of small magnitude, which means that the $L^1$ error can be made arbitrarily small. Observe here that no odd period is present in the function (as this would imply that $L\ge \rho\ge \sqrt{2}$).
\end{itemize}

\section{Experiments}
Our goal here is to experimentally validate our theoretical results by exploring the interplay between optimization/representation error bounds obtained in theory. For instance, in order to understand how training with random initialization works on compositions of periodic functions, we combine the example from \hyperref[lem:example]{Lemma}~\ref{lem:example} together with \hyperref[lem:match]{Theorem}~\ref{lem:match}; in particular, theory suggests that for a fixed width $u$ and depth $l$, as long as the condition stated in the theorem is satisfied, i.e., $(2u)^l \leq \frac{\rho^t}{8}$, then we have an error bound that is \textit{independent} of the width and depth. We indeed validate this and the $L^1$ error we get from theory almost matches the experimental error (see \hyperref[fig:exp]{Figure}~\ref{fig:exp}).

Rewriting the condition of the theorem, for a fixed width and depth, there is a large $t\geq \frac{(l+3)\ln(2)+l\ln(u)}{\ln(\rho)}$, that always produces constant error. To test that, we create a ``hard task'' that satisfies this above equation for depths $l=1,2,3,4,5$, for constant width $u=20$.  On the other end of the spectrum, we create a relatively ``easy task'' (with fewer compositions) and study how the error varies with depth. We define a regression task and we fix the neurons for each layer to be 20. We vary the depth of the NN (excluding the input and the output layer) from $l=1$ to $l=5$, adding one extra hidden layer at a time. We are using the same parameters to train all networks and we require the training error or the mean squared error to tend to 0 during the training procedure, i.e, we try to overfit the data (here we try to demonstrate a representation result, rather than a statistical/generalization result). Thus, during training we use the same parameters to train all the different models using {\sc{Adam}} with epochs being 1500 to enable overfitting. To record the training error, we verify that the training saturates based on the error of the last epoch.
We will now describe the ``easy'' and the ``hard'' task:
\paragraph {The Regression Task:} We create a regression task that is based on the theory that ties periods with Lipschitz constants. The function we want to fit is the composition of $f(x):=\rho|x|-1$ with itself. For $\rho=1.618$ (golden ratio), the Lipschitz is also $\rho$ and exhibits period 3.
The hardness of the task is characterized by the number of compositions of the function $f$, as the oscillations increase exponentially fast with the number of compositions. Thus for the hard task we use $f^{40}(x)$ and for the easy task we use $f^8(x)$.

\begin{figure*}[!ht]
	\centering
	\begin{subfigure}[]{\label{fig:hard_task}
			\includegraphics[width=0.45\linewidth]{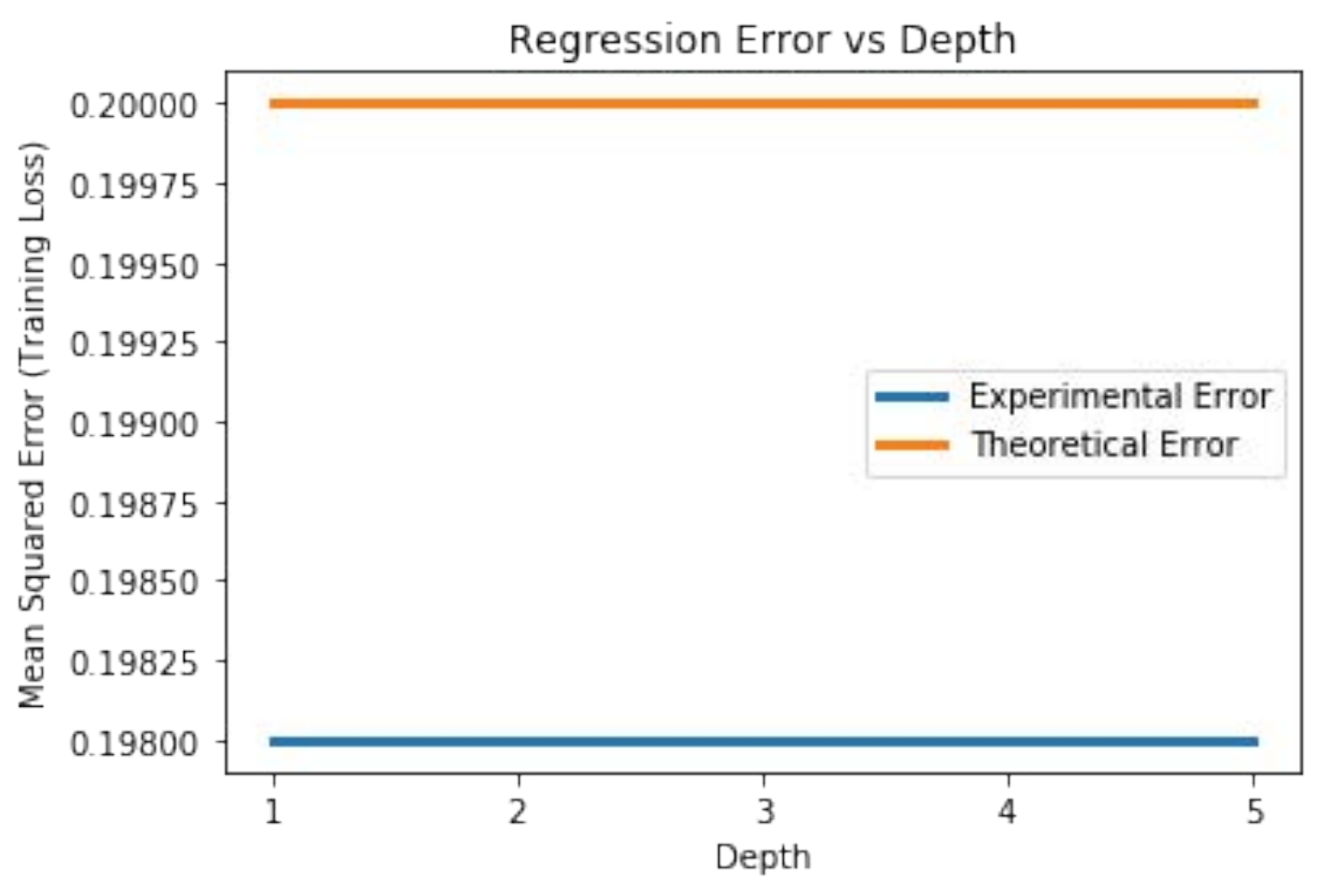}}
	\end{subfigure}%
	\begin{subfigure}[]{\label{fig:easy_task}
			\includegraphics[width=.45\linewidth]{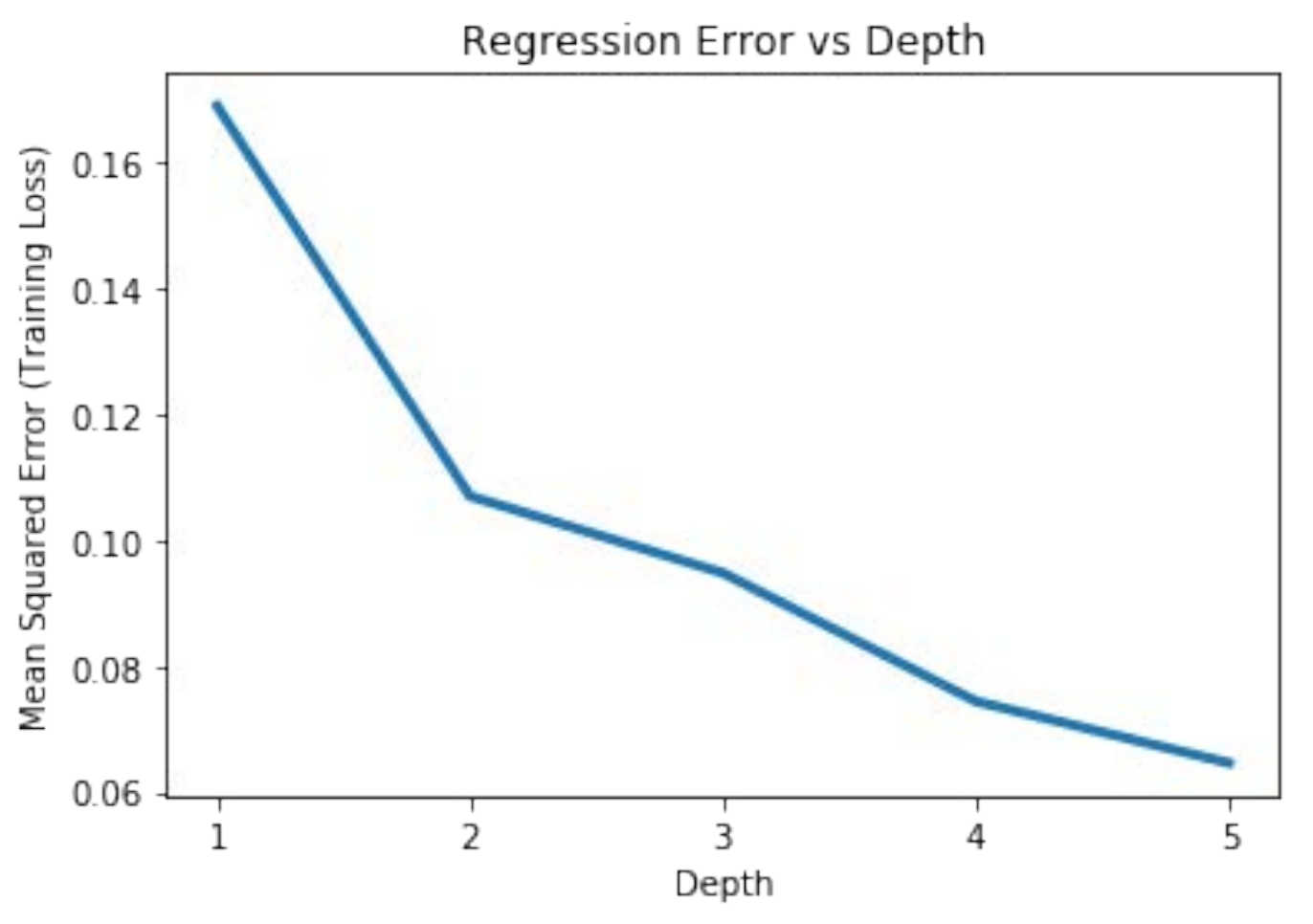}}  
	\end{subfigure}
	\caption{Left: Experimental error closely matches the theoretical bound for the hard task ($f^{40}(x)$) and we get the $L^1$ separation (as long as $u,l$ are related as required by \hyperref[lem:match]{Theorem~\ref{lem:match}}). Right: Generally, the performance of the NN for the regression task improves with depth when the task is easy ($f^{8}(x)$).}
	\label{fig:exp}
\end{figure*}

\section{Discussion}
In conclusion, by combining some ideas from dynamical systems related to periodic orbits and growth rate of oscillations, we presented several results for functions that are expressible with NNs of certain depth, yet are hard-to-represent with shallow, wide NNs. These results generalize and unify previous results on depth separations.

One potential direction for future work, that could further unify the different notions of ``complexity'' considered in previous works, is to explore connections between the notion of \textit{topological} or \textit{metric entropy} (see \cite{german}) and oscillations, periods and VC dimension. The goal here would be to derive general results stating that whenever a function $f$ has large topological entropy, it is harder to approximate its repeated compositions using shallow NNs, as opposed to a function with smaller topological entropy.

\section*{Acknowledgements}

Vaggos Chatziafratis is partially supported by an Onassis Foundation Scholarship.
Sai Ganesh Nagarajan would like to acknowledge SUTD President’s Graduate Fellowship (SUTD-PGF).
Ioannis Panageas would like to acknowledge SRG ISTD 2018 136, NRF for AI Fellowship and NRF2019-
NRF-ANR095. Part of this project happened while the authors were visiting the MIT MIFODS program “Complex Structures” and we would like to thank the organizers for their hospitality.

\bibliography{main}
\bibliographystyle{plainnat}

\end{document}